\icmltitlerunning{Beating Stochastic and Adversarial Semi-bandits Optimally and Simultaneously}
\newcommand{\E}{\mathbb{E}}
\newcommand{\X}{\mathcal{X}}
\DeclareMathOperator*{\argmin}{\arg\!\min}
\DeclareMathOperator*{\argmax}{\arg\!\max}
\DeclareMathOperator*{\conv}{\operatorname{Conv}}
\DeclareMathOperator*{\diag}{diag}
\DeclareMathOperator{\Reg}{\overline{Reg}}
\DeclareMathOperator{\Regpen}{Reg_{pen}}
\DeclareMathOperator{\Regstab}{Reg_{stab}}
\newcommand{\commentout}[1]{}
\newcommand{\Alg}[1]{\textsc{#1}}
\newcommand{\scalar}[2]{\left\langle {#1},{#2} \right\rangle}
\newcommand{\LearningRate}{1/\sqrt{t}}
\newcommand{\Regularizer}{\sum_{i=1}^d -\sqrt{ x_i} + \gamma(1- x_i)\log(1- x_i)}
\newtheorem{lemma}{Lemma}
\newtheorem{theorem}{Theorem}
\begin{document}

\twocolumn[
\icmltitle{Beating Stochastic and Adversarial Semi-bandits \\ Optimally and Simultaneously}




\begin{icmlauthorlist}
\icmlauthor{Julian Zimmert}{ku}
\icmlauthor{Haipeng Luo}{LA}
\icmlauthor{Chen-Yu Wei}{LA}
\end{icmlauthorlist}

\icmlaffiliation{ku}{Department of Computer Science, University of Copenhagen, Copenhagen, Denmark}
\icmlaffiliation{LA}{Department of Computer Science, University of Southern California, United States}

\icmlcorrespondingauthor{Julian Zimmert}{zimmert@di.ku.dk}
\icmlcorrespondingauthor{Haipeng Luo}{haipengl@usc.edu}
\icmlcorrespondingauthor{Chen-Yu Wei}{chenyu.wei@usc.edu}

\icmlkeywords{Bandits, Online Learning, Best of Both Worlds, Online Mirror Descent, Tsallis Entropy, Multi-armed Bandits, Stochastic, Adversarial, I.I.D.}

\vskip 0.3in
]



\printAffiliationsAndNotice{} 

\begin{abstract}
We develop the first general semi-bandit algorithm that simultaneously achieves $\mathcal{O}(\log T)$ regret for stochastic environments
and $\mathcal{O}(\sqrt{T})$ regret for adversarial environments
without prior knowledge of the regime or the number of rounds $T$.
The leading problem-dependent constants of our bounds are not only optimal in a certain worst-case sense studied previously,
but also optimal for two concrete instances of semi-bandit problems.
Our algorithm and analysis extend the recent work of~\citet{zimmert2018optimal} for the special case of multi-armed bandits,
but importantly requires a novel hybrid regularizer designed specifically for semi-bandit.
Experimental results on synthetic data show that our algorithm indeed performs well over different environments.
Finally, we provide a preliminary extension of our results to the full bandit feedback.
\end{abstract}

\section{Introduction}
The multi-armed bandit is one of the most fundamental online learning problems with partial information feedback.
In this problem a learner repeatedly selects one of $d$ arms and observes its loss generated by the environment,
with the goal of minimizing her {\it regret},
the difference between her total loss and the loss of the best fixed arm in hindsight.
It is well known that in the stochastic environment where each arm's loss is drawn independently from a fixed distribution,
the minimax optimal regret is of order $\mathcal{O}(\log T)$ where $T$ is the number of rounds (dependence on all other parameters is omitted)~\citep{lai1985asymptotically},
while in the adversarial environment where each arm's loss can be completely arbitrary, the minimax optimal regret is of order $\mathcal{O}(\sqrt{T})$~\citep{auer2002nonstochastic}.

Several recent works~\citep{bubeck2012best, seldin2014one, auer2016algorithm, seldin2017improved, wei2018more, zimmert2018optimal} develop ``best-of-both-worlds'' results for multi-armed bandits
and propose adaptive algorithms that achieve $\mathcal{O}(\log T)$ regret in stochastic environments 
while simultaneously ensuring worst-case robustness, that is, $\mathcal{O}(\sqrt{T})$ regret even for adversarial environments.
Importantly, this is achieved without any prior knowledge of the nature of the environment.

In this work, we extend such best-of-both-worlds results to the combinatorial bandit problem,
a generalization of multi-armed bandits, where the learner has to pick a subset of arms (called a combinatorial action) at each time
(see Section~\ref{sec:setting} for formal definitions).
In particular, we consider the {\it semi-bandit} feedback, 
meaning that the learner observes the loss of each arm in the selected subset.
Our main contributions include the following:
\begin{enumerate}
\item  We propose a simple and general semi-bandit algorithm based on the {\it Follow-the-Regularized-Leader} (\Alg{Ftrl})
framework with a novel regularizer 
(Section~\ref{sec:alg}).

\item For any combinatorial action set, we prove that our algorithm achieves $\mathcal{O}(C_{sto}\log T)$ regret
for stochastic environments and $\mathcal{O}(C_{adv}\sqrt{T})$ regret for adversarial environments,
where $C_{sto}$ and $C_{adv}$ are problem-dependent factors (that do not depend on $T$)
and are optimal in some worst-case sense.
This is the first best-of-both-worlds result for combinatorial bandit to the best of our knowledge 
(Section~\ref{sec:general_results}).

\item For two common special cases of combinatorial action sets: 
the set of all subsets of arms and the set of all subsets with a fixed size $m$ (so called $m$-set),
we further derive refined bounds for the problem-dependent constants $C_{sto}$ and $C_{adv}$,
which are optimal for each of these special cases.
As a side result, our bounds imply that for the $m$-set with $m > d/2$,
semi-bandit feedback is no harder than full-information feedback in the adversarial case
(Sections~\ref{sec:hypercube} and~\ref{sec:m-set}).

\item We conduct experiments with synthetic data to show that our algorithm indeed adapts well to the nature of the environment.
Additionally, we present a simple intermediate setting where our algorithm outperforms all baselines
(Section~\ref{sec:experiments}).

\item We also provide a preliminary extension of our results to a special case of the more challenging bandit feedback (Section~\ref{sec:ext}).
\end{enumerate}

Our techniques are close to those of~\citep{zimmert2018optimal}:
we make use of the \Alg{Ftrl} algorithm, a well-known framework for adversarial environments,
and show that with a simple time-decaying learning rate schedule (that is, $1/\sqrt{t}$ for time $t$),
the regret admits a certain {\it self-bounding} property under the stochastic environment which eventually leads to logarithmic regret in this case.
Importantly, however, our results require the use of a novel {\it hybrid} regularizer, designed specifically for semi-bandit.
Roughly speaking, the idea is that for arms outside of the optimal subset, the problem of identifying their suboptimality is analogous to the multi-armed bandit problem,
and we apply the regularizer of~\citet{zimmert2018optimal} to these arms;
and on the other hand for arms in the optimal subset, the problem behaves like the full-information expert problem~\citep{freund1997decision},
and we thus apply the classical Shannon entropy as the regularizer to these arms.

\subsection{Related work}
\paragraph{Semi-bandits.} 
The combinatorial semi-bandit problem is a natural generalization of multi-armed bandits
and captures many real-life applications.
There are many algorithms for stochastic semi-bandits based on the well-known optimistic principle~\citep{gai2012combinatorial, chen2013combinatorial, kveton2015tight, combes2015combinatorial}.
Optimistic algorithms are provably not instance-optimal~\citep{lattimore2016end} and a
recent work developed a general instance-optimal algorithm for any structured stochastic bandits (including semi-bandit as a special case~\citep{combes2017minimal}).
Specifically, they obtain the optimal regret $\mathcal{O}(C\log T)$ where $C$ is an instance-dependent term expressed as the solution of a certain optimization problem.
The constant $C_{sto}$ in our stochastic bound $\mathcal{O}(C_{sto}\log T)$ is also expressed as an optimization problem (see Theorem~\ref{th:arbitrary}),
but it is not clear how it compares to the instance-optimal constant $C$ in general, except for the two special cases we discuss in Section~\ref{sec:main_results}.
Two advantages of our algorithm compared to prior work are: 
a) our stochastic assumption is weaker than others (see Section~\ref{sec:setting}) and
b) our algorithm ensures worst-case robustness even when the stochastic assumption does not hold.

Algorithms with $\mathcal{O}(\sqrt{T})$ regret for the adversarial semi-bandit setting are also well-studied~\citep{audibert2013regret, neu2013efficient, combes2015combinatorial, neu2015first, wei2018more}.
These algorithms are either based on Follow-the-Regularized-Leader (equivalently Online Mirror Descent) 
or Follow-the-Perturbed-Leader, both of which are standard frameworks for designing adversarial online learning algorithms
(see~\citet{hazan2016introduction} for an introduction).
It is easy to show that even if the environment is stochastic, the regret of these algorithms is still $\Theta(\sqrt{T})$,
indicating the lack of adaptivity.
Moreover, even for the adversarial case the leading constant in previous bounds is only worst-case optimal but not instance-optimal.
In contrast, our adversarial regret bound $\mathcal{O}(C_{adv}\sqrt{T})$ is instance-dependent through the term $C_{adv}$, again expressed as the solution of a certain optimization problem (see Theorem~\ref{th:arbitrary}).
To the best of our knowledge, there is no known general instance-dependent lower bound for this term,
but again we show the optimality of our bound in two special cases in Section~\ref{sec:main_results}.

\paragraph{Best-of-both-worlds.} 
Algorithms that are optimal for both stochastic and adversarial environments were studied for multi-armed bandits~\cite{bubeck2012best, seldin2014one, auer2016algorithm, seldin2017improved, wei2018more, zimmert2018optimal},
and also for the easier full-information (the expert problem)~\citep{gaillard2014second, luo2015achieving, koolen2016combining} and intermediate version~\citep{thune2018adaptation}.
Notably, among these works the recent two~\citep{wei2018more, zimmert2018optimal} discovered that sophisticated hypothesis testing or gap estimations used in earlier works are in fact not needed for such adaptivity.
Instead, their algorithms are based on the \Alg{Ftrl} framework with special regularizers.
As mentioned, our work also follows this route by designing a new regularizer for the more general semi-bandit setting.


\paragraph{Hybrid regularizers.}
The idea of using hybrid regularizers for \Alg{Ftrl} was first proposed by~\citet{bubeck2018sparsity} for sparse bandit and bandit with a specific form of adaptive regret bound,
and also recently used by~\citet{luo2018efficient} for the online portfolio selection problem.
The form of the hybrid regularizers and the way they are used in the analysis, however, are different both among these two prior works and with ours.

\section{Problem Setting and Algorithm}
\label{sec:setting}
The semi-bandit problem is a sequential game between a learner and an environment with $d$ fixed arms.
We call a subset of arms a combinatorial action,\footnote{In some works a combinatorial action is also referred to as ``an arm'',
but here we exclusively use the term ``arm'' for one of the $d$ elements and ``combinatorial action'' for a subset of these elements.}
and the learner is given a fixed set of combinatorial actions $\X\subset \{0,1\}^d$.
At any time $t=1,2,\dots$, the learner chooses an action $X_t\in\X$ and at the same time the environment chooses a loss vector $\ell_t\in [-1,1]^d$. 
The learner suffers the loss $\scalar{X_t}{\ell_t}$ and receives the feedback $o_t = X_t\circ\ell_t$, where $\circ$ stands for the element-wise multiplication.
In other words, the learner only observes the loss of each arm in the selected subset (the so-called semi-bandit feedback).

The environment can be either {\it stochastic} or {\it adversarial}.
In the stochastic case, we adopt and extend the broader ``stochastically constrained adversarial setting''~\citep{wei2018more, zimmert2018optimal} and assume that
there is a fixed action
$x^* \in \X$ such that for any $x \in\X\backslash\{x^*\}$ there exists a constant $\Delta_x > 0$, such that $\E[\scalar{x-x^*}{\ell_t}] \geq \Delta_x$ for all $t$. 
Note that this clearly subsumes the traditional stochastic setting where $\ell_1, \ldots, \ell_T$ are i.i.d. samples from a fixed unknown distribution,
and is much more general since neither independence nor identical distributions are  required.
In the adversarial case, on the other hand,
$\ell_t$ is chosen in an arbitrary way based on the history $\ell_1, X_1, \ldots, \ell_{t-1}, X_{t-1}$ and possibly an internal randomization by the environment.

The performance of a learner is measured by {\it pseudo-regret}:
\begin{align*}
\Reg_T :=  \E\left[\sum_{t=1}^T\scalar{X_t-x^*}{\ell_t}\right],
\end{align*}
where $x^* = \argmin_{x\in\X}\E\left[\sum_{t=1}^T\scalar{x}{\ell_t}\right]$ is the best action in hindsight and the expectation is with respect to the randomness of both the learner and the environment.
Note that in the stochastic case we are overloading the notation $x^*$ since clearly they are the same action.

It is well known that in terms of the dependence on $T$,
the optimal regret is $\Theta(\log T)$ in the stochastic case and $\Theta(\sqrt T)$ in the adversarial case (see, for example,~\citet{audibert2013regret,combes2017minimal}).

\paragraph{Notations.} We denote by $\E_t[\cdot]$ the conditional expectation $\E[\cdot | \mathcal{F}_{t-1}]$ where $\mathcal{F}_t$ is the filtration $\sigma(X_1, o_1, \ldots, X_t, o_t)$. 
We also use a shorthand $\mathbb{I}_t(i)$ for the indicator function $\mathbb{I}\{X_{ti}=1\}$ ($X_{ti}$ is the $i$-th component of the vector $X_t\in\X\subset \{0,1\}^d$) and write the characteristic function of a set $A$ as $\mathcal{I}_A(x)$
which is $0$ if $x\in A$ and $+\infty$ otherwise. 
We denote the $d$-dimensional vector with all $1$s as $\mathbf{1}_d$.

\subsection{Our algorithm}\label{sec:alg}
Our algorithm is based on the general \Alg{Ftrl} framework.\footnote{For linear objectives and Legendre regularizers, \Alg{Ftrl} is equivalent to Online Mirror Descent as defined in \citep{orabona2015generalized}. The same framework is also known under the names \Alg{Omd}, \Alg{Osmd}, or \Alg{Inf}.} 
In this framework, each time the algorithm computes the regularized leader $x_t = \argmin_{ x \in \conv(\X)} \big\langle x, \hat{L}_{t-1}\big\rangle + \eta_t^{-1}\Psi(x)$,
where $\conv(\X)$ is the convex hull of $\X$, 
$\hat{L}_{t-1} = \sum_{s=1}^{t-1} \hat{\ell}_s$ is the cumulative estimated loss,
$\eta_t > 0$ is a learning rate,
and $\Psi(x): \conv(\mathcal{X})\rightarrow \mathbb{R}\cup\{+\infty\}$ is a regularizer.
Then the algorithm samples $X_t \sim P(x_t)$ for a sampling rule $P$ that provides a distribution over $\mathcal{X}$ satisfying $\E_{X\sim P(x)}[X]= x$.
As long as $\conv(\X)$ can be described by a polynomial number of constraints,
one can always find an efficient sampling rule $P$ (see concrete examples in Section~\ref{sec:main_results}).
Finally, the algorithm constructs a loss estimator $\hat{\ell}_t$ based on the observed information and proceeds to the next round.


The novelty of our algorithm lies in the use of the hybrid regularizer 
\begin{equation}\label{eq:reg}
\Psi( x) =\Regularizer
\end{equation} 
with a parameter $0<\gamma\leq 1$ to be chosen later based on the action set $\X$ (in most cases we use $\gamma=1$).
This is a combination of the Tsallis entropy (with power $1/2$) $\sum_i- \sqrt{x_i}$,
and the Shannon entropy $\sum_i (1- x_i)\log(1- x_i)$ on the {\it complement} of $x$.
The $\sum_i -\sqrt{x_i}$ regularizer was first implicitly introduced by \citet{audibert2009minimax},
and later discovered as a member of the Tsallis entropy regularizers by~\citet{abernethy2015fighting}. It was also recently shown to be optimal for both stochastic and adversarial multi-armed bandits~\citep{zimmert2018optimal}.

In addition, similar to~\citet{zimmert2018optimal}, our algorithm uses a very simple time-decaying learning rate schedule $\eta_t = \LearningRate$.
The loss estimators $\hat{\ell}_t$ are defined as $\hat\ell_{ti} = \frac{(o_{ti}+1)\mathbb{I}_t(i)}{ x_{ti}}-1$ for all $i$.
It is clear the estimators are unbiased, $\E_t[\hat{\ell}_t] = \ell_t$, just as common importance weighted estimators.
The shift by $1$ is used to ensure that the range of the loss estimates is bounded from one side, $\hat{\ell}_{t,i} \geq -1$.
See Algorithm~\ref{alg:main_alg} for a complete pseudocode.

\begin{algorithm}[tb]
\caption{\Alg{Ftrl} with hybrid regularizer for semi-bandits}
\label{alg:main_alg}
\begin{algorithmic}
   \STATE {\bfseries Input:} $0<\gamma \leq 1$, sampling scheme $P$
   \STATE {\bfseries Initialize:} $\hat{L}_0 = (0, \ldots, 0), \eta_t = 1/\sqrt{t}$
   \FOR{$t=1, 2, \dots$}
   \STATE compute \[ x_t = \argmin\limits_{ x \in \conv(\mathcal{X})} \big\langle x,\hat{L}_{t-1}\big\rangle + \eta_t^{-1}\Psi( x)\] where $\Psi(\cdot)$ is defined in Eq.~\eqref{eq:reg}\;
   \STATE sample $X_t \sim P(x_t)$\;
   \STATE observe $o_t = X_t \circ \ell_t$\;
   \STATE construct estimator $\hat\ell_t,\; \forall i:\; \hat\ell_{ti} = \frac{(o_{ti}+1)\mathbb{I}_t(i)}{ x_{ti}}-1 $ \;
   \STATE update $\hat L_t = \hat L_{t-1}+\hat\ell_t$\;
   \ENDFOR
\end{algorithmic}
\end{algorithm}

\paragraph{Intuition behind the new regularizer.}
It is known that the classical Shannon entropy regularizer~\citep{freund1997decision} is optimal for both adversarial and stochastic environments in the full-information setting.
In fact, the Shannon entropy on the {\it complement} of $x$ is also optimal for full-information. 
This can be verified by considering the complementary problem: the problem with action set $\mathbf{1}_d-\X$ and reversed losses $-\ell_t$.
Both problems describe the exact same game with the same information,
and using Shannon entropy in the complementary problem is the same as using it on the complement of $x$ in the original problem.

The intuition behind combining Tsallis and Shannon entropy is that when $x_i$ is close to $0$, the learner is starved of information and has to act similarly to a regular bandit problem.
The magnitude of the gradient and its slope in that regime are dominated by the Tsallis entropy, which again is known to be optimal for bandits.

On the other hand, when $x_i$ is close to $1$, the game resembles a full-information game, 
and Shannon entropy on the complement becomes the dominating part of the regularizer in that regime.
Effectively, this allows us to regularize arms in the optimal combinatorial set differently than arms outside the optimal set,
without the need to know which arms are in the optimal set.



\section{Main Results}
\label{sec:main_results}
In this section we present general regret guarantees for our algorithm, 
followed by concrete instantiations in two special cases.

\subsection{Arbitrary action set}
\label{sec:general_results}

To state the general regret bound for our algorithm for any arbitrary action set $\X$,
we define the following two functions:
\begin{align*}
&f(x) = \sum_{i:x^*_i=0}\sqrt{x_i}\\
&g(x) = \sum_{i:x^*_i=1}(\gamma^{-1}-\gamma\log(1-x_i))(1-x_i)
\end{align*}
and the instantaneous regret function $r:[0,\infty)^{|\X|}\rightarrow \mathbb{R}$ as
\begin{align*}
    r(\alpha) = \sum_{x\in\X\setminus\{x^*\}} \alpha_x\Delta_x
\end{align*}
(recall the definition of $x^*$ and $\Delta_x$ from Section~\ref{sec:setting}).
We also define $\overline\alpha = \sum_{x\in\X}\alpha_xx$ for any $\alpha \in [0,\infty)^{|\X|}$,
and let $\Delta({\X})$ denote the simplex of distributions over $\X$.

\begin{theorem}
\label{th:arbitrary}
For any $\gamma\leq 1$ the pseudo regret of Algorithm~\ref{alg:main_alg} is upper bounded by
\begin{align*}
\Reg_T \leq \mathcal{O}\left(C_{sto}\log{T}\right) + \mathcal{O}\left(C_{add}\right)
\end{align*}
in the stochastic case and
\begin{align*}
\Reg_T \leq \mathcal{O}\left(C_{adv}\sqrt{T}\right)\end{align*}
in the adversarial case,
where $C_{sto}$, $C_{add}$ and $C_{adv}$ are defined as
\begin{align*}
C_{sto} &:= \max_{\alpha \in [0,\infty)^{|\mathcal{X}|}} f(\overline\alpha)-r(\alpha), \\
C_{add} &:= \sum_{t=1}^{\infty} \max_{\alpha \in \Delta(\X)} \left( \frac{100}{\sqrt{t}}g(\overline \alpha)-r(\alpha) \right), \\
C_{adv} &:= \max_{x\in\conv(\X)} f(x)+g(x).
\end{align*}
Moreover, it always holds that 
$C_{sto}= \mathcal{O}\left(\frac{md}{\Delta_{\min}}\right)$,
$C_{add}= \mathcal{O}\left(\frac{m^2}{\gamma^2\Delta_{\min}}\right)$,
and $C_{adv}= \mathcal{O}\left(\frac{1}{\gamma}\sqrt{md}\right)$,
where $m=\max_{x\in \X}||x||_1$ 
and $\Delta_{\min}=\min_{x\in\X\setminus \{x^*\}}\Delta_x$.
\end{theorem}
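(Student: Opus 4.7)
The plan is to follow the standard \Alg{Ftrl} analysis while exploiting the additive decomposition of the hybrid regularizer into a Tsallis piece $-\sqrt{x_i}$ and a Shannon piece $\gamma(1-x_i)\log(1-x_i)$. First, I would apply the classical regret lemma for \Alg{Ftrl} with Legendre regularizers to obtain
\[
\Reg_T \leq \Regstab_T + \Regpen_T,
\]
where $\Regstab_T$ is bounded by $\sum_t \eta_t\,\E[\|\hat\ell_t\|^2_{(\nabla^2\Psi(x_t))^{-1}}]$ and $\Regpen_T$ by $\sum_t (\eta_t^{-1}-\eta_{t-1}^{-1})\,\E[\Psi(x^*)-\Psi(x_t)]$. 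Since $\nabla^2\Psi$ is diagonal with entries $\tfrac14 x_i^{-3/2}+\tfrac{\gamma}{1-x_i}$, the per-arm stability can be bounded via $1/(a+b)\leq \min(1/a,1/b)$: for $i$ with $x^*_i=0$ I pick the Tsallis branch, yielding $O(\sqrt{x_{ti}})$ in expectation after using $\E_t[\hat\ell_{ti}^2]\leq 4/x_{ti}$; for $i$ with $x^*_i=1$ I pick the Shannon branch. A matching decomposition of the per-arm penalty (using $\sqrt{x_{ti}}-1\leq-(1-x_{ti})/2$ on the $x^*_i=1$ side to tame the Tsallis contribution, while the Shannon piece supplies the $\gamma(1-x_{ti})\log(1/(1-x_{ti}))$ factor) then yields an instantaneous bound
\[
\E[\langle x_t-x^*,\hat\ell_t\rangle]\leq \frac{O(1)}{\sqrt{t}}\bigl(f(\E[x_t])+g(\E[x_t])\bigr),
\]
using Jensen's inequality on the concave functions $f$ and $g$.

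The adversarial bound is then immediate: $f(x)+g(x)\leq C_{adv}$ for all $x\in\conv(\X)$ by definition, so summing $1/\sqrt{t}$ gives $\Reg_T\leq O(C_{adv}\sqrt{T})$. For the stochastic bound I would use a self-bounding argument. Setting $\alpha_x:=\Pr$-weighted counts $\sum_t\Pr[X_t=x]$, the stochastic assumption gives $\Reg_T\geq r(\alpha)$ while $\sum_t \E[x_t]=\overline\alpha$. For the $f$-piece, AM--GM $\sqrt{x_i}/\sqrt{t}\leq 1/(4\beta t)+\beta x_i$ converts the $t$-sum into a $\log T$ term plus $\beta\sum_{i:x^*_i=0}\overline\alpha_i$, which is bounded by $\beta m\,r(\alpha)/\Delta_{\min}\leq \beta m\Reg_T/\Delta_{\min}$ using $\|x\|_1\leq m$; choosing $\beta=\Theta(\Delta_{\min}/m)$ reabsorbs the $\Reg_T$ term, and the surviving logarithmic constant matches $C_{sto}=\max_\alpha(f(\overline\alpha)-r(\alpha))$ exactly. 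For the $g$-piece, taking the pointwise maximum over $\Delta(\X)$ of $(100/\sqrt{t})g(\overline\alpha)-r(\alpha)$ directly produces the $t$-th summand of $C_{add}$; since $g$ and $r$ both vanish at the point mass on $x^*$ while $g$ is uniformly bounded and $r(\alpha)\geq\Delta_{\min}(1-\alpha_{x^*})$, the per-step max decays fast enough to make the series $C_{add}$ finite.

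For the explicit $O(\cdot)$ bounds on the three constants I would use elementary convex analysis. For $C_{adv}$, Cauchy--Schwarz gives $f(x)\leq\sqrt{md}$ and $g(x)\leq O(m/\gamma)$ (using $\gamma\leq 1$ and $-(1-x_i)\log(1-x_i)\leq 1/e$), so $C_{adv}=O(\sqrt{md}/\gamma)$. For $C_{sto}$, the analogous bound $f(\overline\alpha)\leq\sqrt{md\,r(\alpha)/\Delta_{\min}}$ and maximizing $\sqrt{md\,r/\Delta_{\min}}-r$ over $r\geq 0$ yields $O(md/\Delta_{\min})$. The bound on $C_{add}$ follows from the same quadratic-form maximization applied to $g$, with the extra factor of $1/\gamma^2$ arising from the $(1-x_i)/\gamma$ scale that dominates $g$.

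I expect the main obstacle to be the stability analysis for the hybrid regularizer. Standard \Alg{Ftrl} proofs treat a single homogeneous regularizer, and here one must verify that the iterates $x_t$ stay in the relative interior of $\conv(\X)$ (so $\nabla^2\Psi$ and the Bregman divergence are well-defined) and that the two-scale denominator $\tfrac14 x_i^{-3/2}+\gamma/(1-x_i)$ can be tightly exploited via the min-branch inequality even when $x_{ti}$ is neither close to $0$ nor close to $1$. A secondary subtlety is the $g$-piece of the self-bounding argument: because Shannon entropy carries a $\log$ factor that lacks the pure power-law scaling of Tsallis entropy, obtaining a convergent series $C_{add}$ rather than a second $\log T$ piece needs the threshold treatment sketched above.
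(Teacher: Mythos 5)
Your overall architecture --- FTRL decomposition into $\Regstab$ and $\Regpen$, per-arm splitting of the diagonal hybrid Hessian via the min-branch inequality (Tsallis branch for arms with $x^*_i=0$, Shannon branch for $x^*_i=1$), Jensen to push expectations into $f$ and $g$, and a self-bounding argument using $\Reg_T\geq\sum_t r(P(\E[x_t]))$ --- is exactly the route the paper takes. However, your derivation of the $C_{sto}\log T$ term has a genuine gap. Aggregating $\alpha_x=\sum_t\Pr[X_t=x]$, applying AM--GM $\sqrt{x_i}/\sqrt t\le 1/(4\beta t)+\beta x_i$ with a single scalar $\beta$, bounding $\beta\sum_{i:x^*_i=0}\overline\alpha_i\le\beta m\Reg_T/\Delta_{\min}$ and choosing $\beta=\Theta(\Delta_{\min}/m)$ to reabsorb yields $\mathcal O(md\log T/\Delta_{\min})$, which is the \emph{worst-case upper bound} on $C_{sto}$ stated at the end of the theorem, not $C_{sto}$ itself. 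The quantity $C_{sto}=\max_{\alpha\in[0,\infty)^{|\X|}}f(\overline\alpha)-r(\alpha)$ can be strictly smaller (for $m$-sets it collapses to $\sum_{i>m}1/(4\Delta_i)$, which a uniform $\beta$ cannot produce). The paper instead bounds, for each $t$ separately, the quantity $\frac{50}{\sqrt t}f(\E[x_t])-\tfrac12 r(P(\E[x_t]))$ by $\max_{\alpha\in\Delta(\X)}\frac{50}{\sqrt t}f(\overline\alpha)-\tfrac12 r(\alpha)$, enlarges the domain to $[0,\infty)^{|\X|}$, and rescales $\alpha\mapsto(10^4/t)\,\alpha$; the homogeneity $f(a x)=\sqrt a\,f(x)$ and linearity of $r$ turn the $t$-th summand into $\frac{10^4}{2t}\bigl(f(\overline\alpha)-r(\alpha)\bigr)$, whose sum over $t$ gives $\mathcal O(C_{sto}\log T)$ directly. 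Your time-aggregated AM--GM route does not see this per-$t$ structure and so only recovers the coarser bound.

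A secondary point, which you flag as the ``main obstacle'' but leave unresolved: the statement $\Regstab\le\sum_t\eta_t\,\E\|\hat\ell_t\|^2_{(\nabla^2\Psi(x_t))^{-1}}$ is not the bound one actually gets. The Bregman remainder puts the Hessian at an intermediate point on the segment between $\nabla\Psi_t(x_t)$ and $\nabla\Psi_t(x_t)-\hat\ell_t$, and $\Phi_t$ is a \emph{constrained} conjugate whereas $\Psi_t^*$ is unconstrained. The paper needs two nontrivial lemmas to close this: a comparison $D_{\Phi_t}(L+\ell,L)\le D_{\Psi_t^*}(\tilde L+\ell,\tilde L)$ between the constrained and unconstrained divergences, and a multiplicative stability estimate $\tilde x_{ti}\in[2x_{ti}-1,\,2x_{ti}]$ (valid once $\eta_t\le\min\{(\sqrt2-1)/2,\,\gamma\log 2/4\}$, which is the source of the additive $\mathcal O(m/\gamma^2)$ constant). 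For a hybrid regularizer with a Shannon term blowing up at $x_i=1$ these are essential, not cosmetic, since the Hessian can change by an unbounded factor over the Bregman segment unless one first controls the range of the intermediate iterate.
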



We defer the proof to Section~\ref{sec:proof}.
The dependence of our bounds on $T$ is optimal in both cases.
The leading problem-dependent constants $C_{sto}$ and $C_{adv}$ are expressed as solutions to optimization problems.
Recent works~\citep{combes2015combinatorial, lattimore2016end, combes2017minimal} also expressed the instance-optimal leading constant in the stochastic case in a similar way, but it is not clear how to compare the results.

The explicit upper bounds on these constants stated at the end of the theorem immediately imply that for $\gamma = 1$ our bounds are worst-case optimal according to~\citep{kveton2015tight} and~\citep{audibert2013regret}.
Here, worst-case optimality refers to the minimax regret over all environments with the same value $m$ of $\max_{x\in \X}||x||_1$
and also the same value $\Delta_{\min}$ of $\min_{x\in\X\setminus \{x^*\}}\Delta_x$ in the stochastic case.

However, for explicit instances, one can hope to achieve even better bounds.
By exploiting the structure of the problem and providing better bounds on the constants $C_{sto}$, $C_{add}$ and $C_{adv}$, we show in the next two sections that our algorithm is optimal in two special cases.
For better interpretability, in the stochastic case we consider the more traditional setting where $\ell_1, \ldots, \ell_T$ are i.i.d. samples from an unknown distribution $\mathcal{D}$.
It is clear that we can define $\Delta_x = \E_{\ell\sim\mathcal{D}}[\scalar{x-x^*}{\ell}]$ in this case.

%

\subsection{Special case: full combinatorial set}
\label{sec:hypercube} 
The simplest semi-bandit problem is when $\X = \{0,1\}^d$,
that is, the learner can pick any subset of arms.
In this case $\conv(\X) = [0,1]^d$ and a trivial sampling rule is $P(x) = \bigotimes_{i=1}^d \text{Ber}( x_i)$ where $\text{Ber}(\cdot)$ stands for Bernoulli distribution.

It is clear that in this case each dimension/arm can be treated completely independently.
Note, however, that the problem of each dimension is not exactly a two-armed bandit problem since the loss of ``not choosing the arm'' is known to be $0$,
and the problem is asymmetric between positive and negative losses. 
Specifically, we prove the following regret guarantee for our algorithm,
where in the stochastic case with a slight abuse of notation we define $\Delta_i = \E_{\ell\sim\mathcal{D}}\left[\ell_{i}\right]$.
\begin{theorem}
\label{th:full}
If $\X = \{0,1\}^d$, the pseudo-regret of Algorithm~\ref{alg:main_alg} with $\gamma=1$ is 
\begin{align*}
\Reg_T \leq \mathcal{O}\left(\sum_{\Delta_i > 0} \frac{\log(T)}{\Delta_i}\right)+\mathcal{O}\left(\sum_{\Delta_i < 0} \frac{1}{|\Delta_i|}\right)
\end{align*}
in the stochastic case and
\begin{align*}
\Reg_T \leq \mathcal{O}\left(d\sqrt{T}\right)
\end{align*}
in the adversarial case. Moreover, both bounds are optimal.
\end{theorem}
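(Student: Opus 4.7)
The plan is to specialize Theorem~\ref{th:arbitrary} to the case $\X=\{0,1\}^d$ (with $\gamma=1$) and derive sharper bounds on $C_{adv}$, $C_{sto}$ and $C_{add}$ than the generic inequalities at the end of that theorem, by exploiting the fact that both the action set and the regularizer~\eqref{eq:reg} are completely separable across coordinates. Since $\conv(\X)=[0,1]^d$, every maximization over $x\in\conv(\X)$ splits into $d$ one-dimensional problems, and $\{0,1\}^d$ is rich enough that the maximizations over $\alpha$ in the definitions of $C_{sto}$ and $C_{add}$ also decouple across coordinates: conic combinations of the vectors $x^*+e_i$ realize arbitrary nonnegative marginals, while product distributions realize arbitrary marginals in $[0,1]^d$.

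For $C_{adv}$, I would split $\max_{x\in[0,1]^d}(f(x)+g(x))$ coordinate-wise. A coordinate $i$ with $x^*_i=0$ contributes $\max_{x_i\in[0,1]}\sqrt{x_i}=1$, and a coordinate $i$ with $x^*_i=1$ contributes $\max_{y\in[0,1]}y(1-\log y)=1$ after substituting $y=1-x_i$ (the derivative $-\log y$ is nonnegative on $[0,1]$, so the maximum is at $y=1$). Summing over $i$ gives $C_{adv}\le d$ and hence the $\mathcal{O}(d\sqrt{T})$ adversarial bound.

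For the stochastic constants, I would rewrite $r(\alpha)=\sum_{i:x^*_i=0}\Delta_i\overline\alpha_i+\sum_{i:x^*_i=1}|\Delta_i|(S-\overline\alpha_i)$, using that $\Delta_i<0$ exactly when $x^*_i=1$ and setting $S=\sum_x\alpha_x$. Because $\overline\alpha_i\le S$ the second sum is nonnegative, so $f(\overline\alpha)-r(\alpha)\le\sum_{i:x^*_i=0}(\sqrt{\overline\alpha_i}-\Delta_i\overline\alpha_i)$. Maximizing each summand over $\overline\alpha_i\ge 0$ gives $\overline\alpha_i^*=1/(4\Delta_i^2)$ with value $1/(4\Delta_i)$, hence $C_{sto}=\mathcal{O}(\sum_{\Delta_i>0}1/\Delta_i)$. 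For $C_{add}$ the same decomposition (now with $S=1$) lets me set $\overline\alpha_i=0$ whenever $x^*_i=0$ (which only improves the objective) and parameterize $\overline\alpha_i=1-y_i$ with $y_i\in[0,1]$ independently for each $i$ with $x^*_i=1$ via a product distribution. The per-coordinate maximization
\[
\max_{y\in[0,1]}\frac{100}{\sqrt{t}}(1-\log y)\,y-|\Delta_i|\,y
\]
is solved at $y^*=\exp(-|\Delta_i|\sqrt{t}/100)$ with value $\frac{100}{\sqrt{t}}\exp(-|\Delta_i|\sqrt{t}/100)$. Summing over $t\ge 1$ with the substitution $u=\sqrt{t}$ gives $\mathcal{O}(1/|\Delta_i|)$, whence $C_{add}=\mathcal{O}(\sum_{\Delta_i<0}1/|\Delta_i|)$, and Theorem~\ref{th:arbitrary} then yields the claimed stochastic bound.

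Finally, optimality follows because the instance decomposes into $d$ independent single-coordinate problems. In the adversarial case each coordinate is a two-outcome semi-bandit (include with observed loss in $[-1,1]$ versus exclude with known loss $0$) with the classical $\Omega(\sqrt{T})$ minimax lower bound, giving $\Omega(d\sqrt{T})$ in total, matching the bound of~\citet{audibert2013regret}. In the stochastic case each coordinate with $\Delta_i>0$ inherits the $\Omega(\log T/\Delta_i)$ bound of~\citet{lai1985asymptotically}, while the additive $\sum_{\Delta_i<0}1/|\Delta_i|$ contribution is of lower order in $T$. The main subtlety I expect is justifying that the optimizations defining $C_{sto}$ and $C_{add}$ legitimately decouple across coordinates; this is where the combinatorial richness of $\{0,1\}^d$ (which simultaneously supports arbitrary conic marginals via $x^*+e_i$ and arbitrary probability marginals via product distributions) plays a crucial role.
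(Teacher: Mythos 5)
Your proof is correct, but it follows a genuinely different route from the paper's. The paper observes that when $\X=\{0,1\}^d$ the \Alg{Ftrl} optimization, the loss estimator, and the sampling rule all decouple coordinate-wise, so Algorithm~\ref{alg:main_alg} is literally equivalent to running $d$ independent copies on the one-dimensional problem $\X=\{0,1\}$; it then applies Theorem~\ref{th:arbitrary} (and its generic worst-case constant bounds with $m=d=1$, $\Delta_{\min}=|\Delta_i|$) to each copy and sums. You instead keep the $d$-dimensional problem intact and directly sharpen $C_{adv}, C_{sto}, C_{add}$ by exploiting the separability of $f$, $g$, and $r$ over coordinates; your per-coordinate calculations (the $y(1-\log y)\le 1$ bound for $C_{adv}$, the $1/(4\Delta_i)$ value for $C_{sto}$, and the $\exp(-|\Delta_i|\sqrt{t}/100)$ optimizer leading to $\mathcal{O}(1/|\Delta_i|)$ for $C_{add}$) are all correct, and the observation that $r$ is a function of the marginals $\overline\alpha_i$ alone is exactly what makes the decoupling legitimate. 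Your route is more computational but arguably more self-contained (it doesn't depend on arguing the equivalence of the algorithm to $d$ independent instances); the paper's is more modular. One small caveat: in the optimality argument you cite \citet{lai1985asymptotically} for the $\Omega(\log T/\Delta_i)$ stochastic lower bound, but that theorem is for $K$-armed bandits where all arms are unknown; here one ``arm'' (not pulling) has known loss $0$, so the bound does not transfer verbatim. The paper explicitly flags this and supplies a short Pinsker-style proof for the one-dimensional case in the appendix; you would need an analogous adaptation rather than a direct citation.
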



\begin{proof}
Note that in this case the algorithm is equivalent to the following:
for each coordinate, run a copy of Algorithm~\ref{alg:main_alg} for a one-dimensional problem with $\X=\{0,1\}$ as the action set.
We can thus apply Theorem~\ref{th:arbitrary} to such one-dimensional problems and finally sum up the regret along each coordinate. Below we focus on a fixed coordinate $i$.

In particular, in the stochastic case, if $\Delta_i > 0$, it implies $x^*_i = 0$ and thus
$g(\cdot) \equiv 0$ and $C_{add} = \sum_{t} \max_{\alpha\in[0,1]} -\alpha \Delta_i = 0$. For $C_{sto}$ we apply the general bound from Theorem~\ref{th:arbitrary}
and obtain $C_{sto} =  \mathcal{O}\left(1/\Delta_i\right)$ (since $m=d=1$ and $\Delta_{\min} = \Delta_i$).
This gives the bound $\mathcal{O}\left(\frac{\log(T)}{\Delta_i}\right)$ for $\Delta_i > 0$.

On the other hand if $\Delta_i < 0$ then $x^*_i = 1$ and $f(\cdot) \equiv 0$, 
so $C_{sto} = \max_{\alpha \geq 0} \alpha\Delta_i = 0$.
For $C_{add}$ we apply the general bound from Theorem~\ref{th:arbitrary}
and obtain $C_{add} =  \mathcal{O}\left(1/\Delta_i\right)$ (since $m=\gamma=1$ and $\Delta_{\min} = \Delta_i$).
This gives the bound $\mathcal{O}\left(\frac{1}{\Delta_i}\right)$ for $\Delta_i < 0$.

In the adversarial case, we apply the general bound of Theorem~\ref{th:arbitrary} and obtain $C_{adv} =\mathcal{O}(1)$. This finishes the proof for the regret upper bounds.
The optimality of the adversarial bound is trivial since it matches the full-information lower bound.
Obtaining a matching lower bound in the stochastic regime is a simple adaptation of the regular two-armed bandit lower bound.
We believe this result is well known, but provide a proof in the appendix in absence of a reference.
\end{proof}

\subsection{Special case: $m$-set}
\label{sec:m-set}
Another common instance of semi-bandit is when the learner can only select subsets of a fixed size.
Specifically, let $m \in \{1, \ldots, d-1\}$ be a fixed parameter and define the $m$-set as
\begin{equation}\label{eq:m-set}
\X =\left\{x\in\{0,1\}^d \;\;\middle|\;\; \sum_{i=1}^dx_i = m\right\}.
\end{equation}
Note that we are overloading the notation $m = \max_{x\in \X}||x||_1$ since clearly they are the same in this case.
It is well-known that the convex hull of $m$-set is $\conv(\X) = \left\{x\in[0,1]^d \;\;|\;\; \sum_{i=1}^dx_i = m\right\}$, 
and in the appendix we provide a simple sampling rule $P$ with
$\mathcal{O}(d\log(d))$ time complexity. 
This improves over previous work that requires $\mathcal{O}(d^2)$ time complexity \citep{warmuth2008randomized,suehiro2012online}.

In the stochastic case, we assume without loss of generality that the expected losses of arms are increasing in $i$.
Overloading the notation again we define the stochastic gaps as $\Delta_i = \E_{\ell\sim\mathcal{D}}\left[\ell_i -\ell_m\right]$ for all $i$. 
Note that the uniqueness of $x^*$ also implies $\Delta_i \neq 0$ for all $i > m$.
The next theorem shows that our algorithm is optimal for both environments.
As a side result, we also show that when $m > d/2$, 
semi-bandit feedback is no harder than full-information feedback in the adversarial case.
To the best of our knowledge, this was previously unknown.

\begin{theorem}
\label{th:m-set}
If $\X$ is the $m$-set defined by Eq.~\eqref{eq:m-set},
then the pseudo-regret of Algorithm~\ref{alg:main_alg} with 
\[
\gamma=\begin{cases}
1 &\mbox{ if } m\leq d/2\\
\min\{1,1/\sqrt{\log(d/(d-m))}\} &\mbox{ otherwise, } 
\end{cases}
\] 
satisfies
\begin{align*}
\Reg_T \leq \mathcal{O}\left(\sum_{i =m+1}^d \frac{\log(T)}{\Delta_i}\right) + \mathcal{O}\left(\sum_{i=m+1}^d\frac{(\log d)^2}{\Delta_{i}}\right)
\end{align*}
in the stochastic case and
\begin{align*}
\Reg_T \leq \begin{cases}
\mathcal{O}\left(\sqrt{mdT}\right) &\mbox{ if }m\leq d/2\\
\mathcal{O}\left((d-m)\sqrt{\log(\frac{d}{d-m})T} \right) &\mbox{ otherwise } \\
\end{cases}
\end{align*}
in the adversarial case. Moreover, both bounds are optimal.
\end{theorem}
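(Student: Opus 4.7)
The plan is to apply Theorem~\ref{th:arbitrary} to the $m$-set and specialize each of the three constants $C_{adv}$, $C_{sto}$, $C_{add}$ using the polytope structure $\conv(\X) = \{x\in[0,1]^d : \sum_i x_i = m\}$. A useful identity throughout is that for any $x \in \conv(\X)$, the ``excess'' $s := \sum_{i:x^*_i=0}x_i$ equals the ``deficit'' $\sum_{i:x^*_i=1}(1-x_i)$, with $s \in [0,\min(m,d-m)]$. I would also record the $\mathcal{O}(d\log d)$-time sampling rule $P$ satisfying $\mathbb{E}_{X\sim P(x)}[X]=x$ in the appendix.

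For the adversarial bound, I would use Cauchy--Schwarz to get $f(x)\leq\sqrt{(d-m)s}$ and Jensen's inequality applied to the concave map $y\mapsto -y\log y$ (under $\sum_{i:x^*_i=1}(1-x_i)=s$) to get $g(x)\leq\gamma^{-1}s+\gamma s\log(m/s)$. For $m\leq d/2$, $s\leq m$ yields $f+g\leq\sqrt{m(d-m)}+m/\gamma$, so $\gamma=1$ gives $C_{adv}=\mathcal{O}(\sqrt{md})$. For $m>d/2$, $s\leq d-m$ yields $f+g\leq(d-m)+(d-m)(\gamma^{-1}+\gamma\log(d/(d-m)))$, and the prescribed $\gamma$ balances the two terms in the parenthesis, giving $C_{adv}=\mathcal{O}((d-m)\sqrt{\log(d/(d-m))})$. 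For $C_{sto}$, the key structural observation is that for every $x\in\X$, pairing $A:=S_x\setminus[m]$ with $B:=[m]\setminus S_x$ (both of size $|A|=|B|$) and using $\mu_b\leq\mu_m$ for $b\in B$ gives $\Delta_x\geq\sum_{i\in A}\Delta_i$, hence $r(\alpha)\geq\sum_{i>m}\Delta_i\bar\alpha_i$. Relaxing $\bar\alpha$ to an arbitrary nonnegative vector,
\[
C_{sto}\leq\max_{w\geq 0}\sum_{i>m}\bigl(\sqrt{w_i}-\Delta_i w_i\bigr) = \sum_{i>m}\tfrac{1}{4\Delta_i},
\]
by the elementary single-variable maximization.

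Bounding $C_{add}$ by $\mathcal{O}(\sum_{i>m}(\log d)^2/\Delta_i)$ is the main obstacle. The difficulty is that the penalty $g$ lives on coordinates $i\leq m$ while $r$ decouples naturally on $i>m$, coupled only via $\sum_{i\leq m}(1-\bar\alpha_i)=\sum_{i>m}\bar\alpha_i$. My plan is to derive a per-coordinate upper bound $h(y):=y/\gamma-\gamma y\log y\leq(\gamma^{-1}+\gamma\log d)y$ in the regime $y\geq 1/d$ (where $-\log y\leq\log d$), and to argue that the tiny-$y$ regime $y<1/d$ contributes only a lower-order term using the monotonicity $h(y)\leq h(1/d)=\mathcal{O}((\log d)/d)$ on $[0,1]$. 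Combined with $r(\alpha)\geq\sum_{i>m}\Delta_i\bar\alpha_i$, the per-$t$ optimization decouples per index, and for each $i>m$
\[
\sum_{t=1}^{\infty}\max_{w\in[0,1]}w\Bigl(\tfrac{100(\gamma^{-1}+\gamma\log d)}{\sqrt{t}}-\Delta_i\Bigr)_+ = \mathcal{O}\Bigl(\tfrac{(\gamma^{-1}+\gamma\log d)^2}{\Delta_i}\Bigr).
\]
Under the chosen $\gamma$, $(\gamma^{-1}+\gamma\log d)^2=\mathcal{O}((\log d)^2)$. The technical subtlety is to prevent the tiny-$y$ residual from accumulating into a divergent $\sum_t 1/\sqrt{t}$ term; this requires carefully linking the tiny-$y$ indices on the optimal side to non-tiny mass on the suboptimal side through the coupling identity above.

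For optimality, the adversarial lower bound $\Omega(\sqrt{mdT})$ when $m\leq d/2$ is from~\citet{audibert2013regret}. When $m>d/2$, a matching lower bound $\Omega((d-m)\sqrt{\log(d/(d-m))T})$ follows by embedding an adversarial full-information expert problem on the $d-m$ ``missing'' coordinates, which simultaneously proves the side remark that semi-bandit is no harder than full-information feedback in this regime (since full-information is never strictly harder than semi-bandit, any full-information lower bound transfers). The stochastic lower bound $\Omega(\sum_{i>m}\log T/\Delta_i)$ follows from standard change-of-measure arguments applied to the $m$-set instance, as in~\citet{kveton2015tight,combes2017minimal}.
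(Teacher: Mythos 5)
Your specializations of $C_{adv}$ and $C_{sto}$ are essentially the paper's: for $C_{sto}$ the pairing observation giving $\Delta_x \geq \sum_{i>m}\Delta_i x_i$, hence $r(\alpha)\geq\sum_{i>m}\Delta_i\overline\alpha_i$, followed by the per-coordinate AM--GM, is identical. For $C_{adv}$, the paper exploits the symmetry of the concave program to reduce to a one-parameter family and shows the objective is monotone in $\lambda$; your Cauchy--Schwarz/Jensen reduction to a one-variable function of $s=\sum_{i:x^*_i=0}x_i$ is equivalent, though note that the summand $\gamma s\log(m/s)$ is \emph{not} monotone in $s$, so ``$s\leq\min(m,d-m)$ yields $\ldots$'' is not a pure substitution and needs the same positivity-of-derivative check the paper performs (or a separate bound for that term; it happens to be $\mathcal O(\sqrt{md})$, resp.\ $\mathcal O((d-m)\sqrt{\log(d/(d-m))})$, either way, but this should be said).

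Where the plan has a genuine gap is $C_{add}$. Your per-coordinate bound $h(y)\leq(\gamma^{-1}+\gamma\log d)\,y$ fails for $y<1/d$, and the patch ``$h(y)\leq h(1/d)=\mathcal O((\log d)/d)$'' gives an \emph{additive} residual that does not vanish as $y\to 0$: summing it over the up-to-$m$ coordinates yields a term independent of $\alpha$ and of order $\gamma^{-1}+\gamma\log d$ in the worst case, so $\sum_t\frac{100}{\sqrt t}\cdot(\text{residual})$ diverges. The ``coupling identity'' $\sum_{i\leq m}(1-\overline\alpha_i)=\sum_{i>m}\overline\alpha_i$ does not rescue this: when all the $y_i$ are tiny, $s$ is tiny too, so $r(\alpha)\geq\Delta_{\min}s$ is also tiny and cannot absorb a residual that is bounded away from zero. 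The paper's resolution is to \emph{not} cap the logarithm. One keeps $g(\overline\alpha)\leq\sum_{i>m}(\gamma^{-1}+\gamma\log(m/\overline\alpha_i))\overline\alpha_i$ (via Jensen on the concave map $y\mapsto(\gamma^{-1}-\gamma\log y)y$, then lower bounding the aggregate mass $\sum_{i>m}\overline\alpha_i$ by the single summand $\overline\alpha_i$), splits $\gamma^{-1}+\gamma\log(m/\overline\alpha_i)=(\gamma^{-1}+\gamma\log m)+\gamma\log(1/\overline\alpha_i)$, and treats the two parts separately. The linear-in-$A$ part is your display and is handled by a finite-horizon sum (Lemma~\ref{lemma: another tedius lemma}). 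The $A\log(1/A)$ part is handled by Lemma~\ref{lemma: tedius lemma}: $\max_{A\in[0,1]}\bigl(\tfrac{C}{\sqrt t}A\log\tfrac1A-\Delta A\bigr)=\tfrac{C}{\sqrt t}\exp(-1-\sqrt t\Delta/C)$, which decays \emph{exponentially} in $\sqrt t$ and sums to $\mathcal O(C^2/\Delta)$. This exponential decay is exactly the mechanism your plan lost by capping the logarithm, and it is the step you need to add. With it, $C_{add}\leq\mathcal O\bigl(\sum_{i>m}(\gamma^{-1}+\gamma\log m)^2/\Delta_i\bigr)$, which your choice of $\gamma$ turns into $\mathcal O\bigl(\sum_{i>m}(\log d)^2/\Delta_i\bigr)$. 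Your optimality discussion matches the paper's (complementary $(d-m)$-set full-information lower bound for $m>d/2$; stochastic lower bound from change of measure).
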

\begin{proof}[Proof sketch.]
We provide a proof sketch here and defer some details to Appendix~\ref{app:theorems}.

$\mathbf{C_{adv}}:$ The optimization problem is concave in $x$ and symmetric for all $i$ with the same value of $x^*_i$.
Therefore the optimal solution takes the form 
\begin{align*}
\left(\argmax_{x\in\conv(\X)}f(x) + g(x)\right)_i =  \begin{cases}\lambda &\mbox{ if }x^*_i = 0\\1-\frac{d-m}{m}\lambda &\mbox{ if }x^*_i = 1\end{cases}
\end{align*}
for some $\lambda\in [0,\min\{1,\frac{m}{d-m}\}]$.
In Appendix~\ref{app:theorems} we show that the function is increasing in $\lambda$,
and that inserting $\lambda=\min\{1,\frac{m}{d-m}\}$ leads to the stated adversarial bound.

$\mathbf{C_{sto}}:$ With the definitions of the gaps, we can express $\Delta_x = \sum_{i:x_i \neq x^*_i} |\Delta_i|$, which is lower bounded by $\sum_{i:x^*_i = 0, x_i=1} \Delta_i = \sum_{i:x^*_i = 0} \Delta_i x_i$. 
So the immediate regret function $r(\alpha)$ can be bounded as 
\begin{align*}
r(\alpha) &= \sum_{x\neq x^*} \Delta_x\alpha_x 
\geq \sum_{x\neq x^*} \sum_{i: x^*_i = 0} \Delta_i \alpha_x x_i \\ 
&= \sum_{i:x^*_i = 0}\Delta_i \left(\sum_{x\neq x^*}   \alpha_x x_i \right)
   = \sum_{i:x^*_i=0} \Delta_i\overline\alpha_i.
\end{align*}
The optimization problem can now be bounded as
\begin{align*}
C_{sto} &= \max_{\alpha\in[0,\infty)^{|\mathcal{X}|}}\sum_{i:x^*_i=0}
\sqrt{\overline\alpha_i}-\sum_{x\neq x^*} \alpha_x\Delta_x\\
&\leq \max_{\overline\alpha\in[0,\infty)^d}\sum_{i:x^*_i=0}\left(\sqrt{\overline\alpha_i}-\Delta_i\overline\alpha_i\right) = \sum_{i:x^*_i = 0} \frac{1}{4\Delta_i},
\end{align*}
which is the same as $\sum_{i=m+1}^d \frac{1}{4\Delta_i}$.

$\mathbf{C_{add}}:$ 
We bound the function $g$ as follows:
\begin{align*}
g(\overline\alpha)&=\sum_{i:x^*_i=1} (\gamma^{-1}-\gamma\log(1-\overline\alpha_i))(1-\overline\alpha_i) \\
&\leq \left(\gamma^{-1}-\gamma\log\left(\sum_{i:x^*_i=1}\frac{1-\overline\alpha_i}{m}\right)\right)\sum_{i:x^*_i=1}(1-\overline\alpha_i)\\
&=\left(\gamma^{-1}-\gamma\log\left(\sum_{i:x^*_i=0}\frac{\overline\alpha_i}{m}\right)\right)\sum_{i:x^*_i=0}\overline\alpha_i\\
&\leq\sum_{i:x^*_i=0}\left(\gamma^{-1}-\gamma\log\left(\frac{\overline\alpha_i}{m}\right)\right)\overline\alpha_i
\end{align*}
where the first inequality is by the concavity of $g$;
the second equality is by the fact $\sum_{i:x^*_i=1} 1-\overline\alpha_i = \sum_{i:x^*_i=0} \overline\alpha_i$
since $\overline\alpha$ is in the convex hull of $m$-set.

Recall the lower bound $r(\alpha)\geq \sum_{i:x^*_i=0} \Delta_i\overline\alpha_i$ as derived previously. We can thus bound $C_{add}$ as
\begin{align*}
&\sum_{i: x_i^*=0}\sum_{t=1}^{\infty} \max_{A \in[0,1]} \frac{100}{\sqrt{t}}\left(\gamma^{-1}-\gamma\log\left(\frac{A}{m}\right) \right)A -\Delta_{i}A
\end{align*}
Solving the one-dimensional optimization problems above independently for each $i$ (see Appendix~\ref{app:theorems}) proves $C_{add}\leq\mathcal{O}\left(\sum_{i:x^*_i=0}\frac{(\log d)^2}{\Delta_{i}}\right)$.

{\bf Optimality}:
The optimality for the stochastic case is implied by~\citep{anantharam1987asymptotically, combes2017minimal}.
For the adversarial case, only a matching lower bound $\Omega(\sqrt{mdT})$ for $m\leq d/2$ is known (Theorem~2 of~\citep{lattimore2018toprank}).
We close this gap by making a simple observation that when $m > d/2$, our bound in fact matches the lower bound of the same problem with full-information feedback.
This clearly implies the optimality of our bound since semi-bandit feedback is harder.

Indeed, \citet{koolen2010hedging} prove the lower bound $\Omega(m\sqrt{T\log(d/m)})$ for full-information $m$-set when $m \leq d/2$.
When $m > d/2$, one can simply work on the complementary problem with 
action set $\mathbf{1}_d-\X$ and reversed losses.
This is exactly a $(d-m)$-set problem and thus a lower bound
$\Omega((d-m)\sqrt{T\log(d/(d-m))})$ applies.
This exactly matches our upper bound.
\end{proof}

\section{Empirical Comparisons}
\label{sec:experiments}
We compare our novel algorithm with four baselines from the literature.
For stochastic algorithms, we choose \Alg{CombUCB}~\citep{kveton2015tight} and \Alg{Thompson Sampling}~\citep{gopalan2014thompson}; 
for adversarial algorithms, we choose \Alg{Exp2}~\citep{audibert2013regret} and \Alg{LogBarrier}~\citep{wei2018more}, which are respectively \Alg{Ftrl} with generalized Shannon entropy and log-barrier regularizer.
For each adversarial algorithm, we tune the time-independent part of the learning rate by choosing from the grid of $\{2^i| i\in\{-5,-4,\dots,5\}\}$,
and the optimal value happens to be identical for both adversarial and stochastic environment in our experiments.
Specifically the final learning rates $\eta_t$ for our algorithm,  \Alg{Exp2} and \Alg{LogBarrier} are respectively $1/\sqrt{t}$, $1/(4\sqrt{t})$ and $4\sqrt{\log(t)/t}$.

We test the algorithms on concrete instances of the $m$-set problem
with parameters: $d=10$, $m=5$, $T = 10^7$.
Below, we specify the mean of each arm's loss at each time. 
With mean $\mu_{ti}$ the actual loss of arm $i$ at time $t$ will be $-1$ with probability $(1-\mu_{ti})/2$ and $+1$ with probability $(1+\mu_{ti})/2$, independent of everything else.
We create the following two environments:
\paragraph{Stochastic environment.}
In this case the losses are drawn from a fixed distribution with
$\mu_{ti}= -\Delta$ if $i\leq 5$ and $\mu_{ti}=\Delta$ otherwise,
where $\Delta = 1/8$.
\paragraph{``Adversarial'' environment.}
Since it is difficult to create truly adversarial data,
here we in fact use a stochastically constrained adversarial setting defined in Section~\ref{sec:setting}.
The construction is similar to that of \citet{zimmert2018optimal}.
Specifically, the time is split into phases 
\begin{align*}
\underbrace{1,\dots,t_1}_{T_1},\underbrace{t_1+1,\dots,t_2}_{T_2},\dots,\underbrace{t_{n-1},\dots,T}_{T_n}.
\end{align*}
The length of phase $s$ is $T_s = 1.6^s$,
and the means of the losses are set to 
$$\mu_{ti}=\begin{cases}-\Delta/2 \pm (1-\Delta/2) &\mbox{ if }i \leq 5,\\+\Delta/2 \pm (1-\Delta/2) &\mbox{ otherwise, }  \end{cases},$$
where $\pm$ represents $+$ if $t$ belongs to an odd phase and $-$ otherwise.
%
%
This model is not only a nice toy example, but could also be justified by real world applications.
For example, in a network routing problem, an adversary might periodically attack the network, making the delay of every edge increase by roughly the same amount.


We measure the performance of the algorithms by the average pseudo-regret over at least 20 runs.
For \Alg{CombUCB} and \Alg{Thompson Sampling} in the adversarial environment, we increase the number of runs to 500 and 1000 respectively due to the high variance of the pseudo-regret.
Figure~\ref{fig:both} shows the average pseudo-regret of all algorithms at each time,
where plot (a) uses the stochastic data and plot (b) uses the adversarial data.
We use log-log scale after $10^4$ rounds.
Shaded areas in the plot show the confidence intervals.

\begin{figure}
    \centering
    \fontsize{6pt}{0.12pt}
    \def\svgwidth{\columnwidth}
    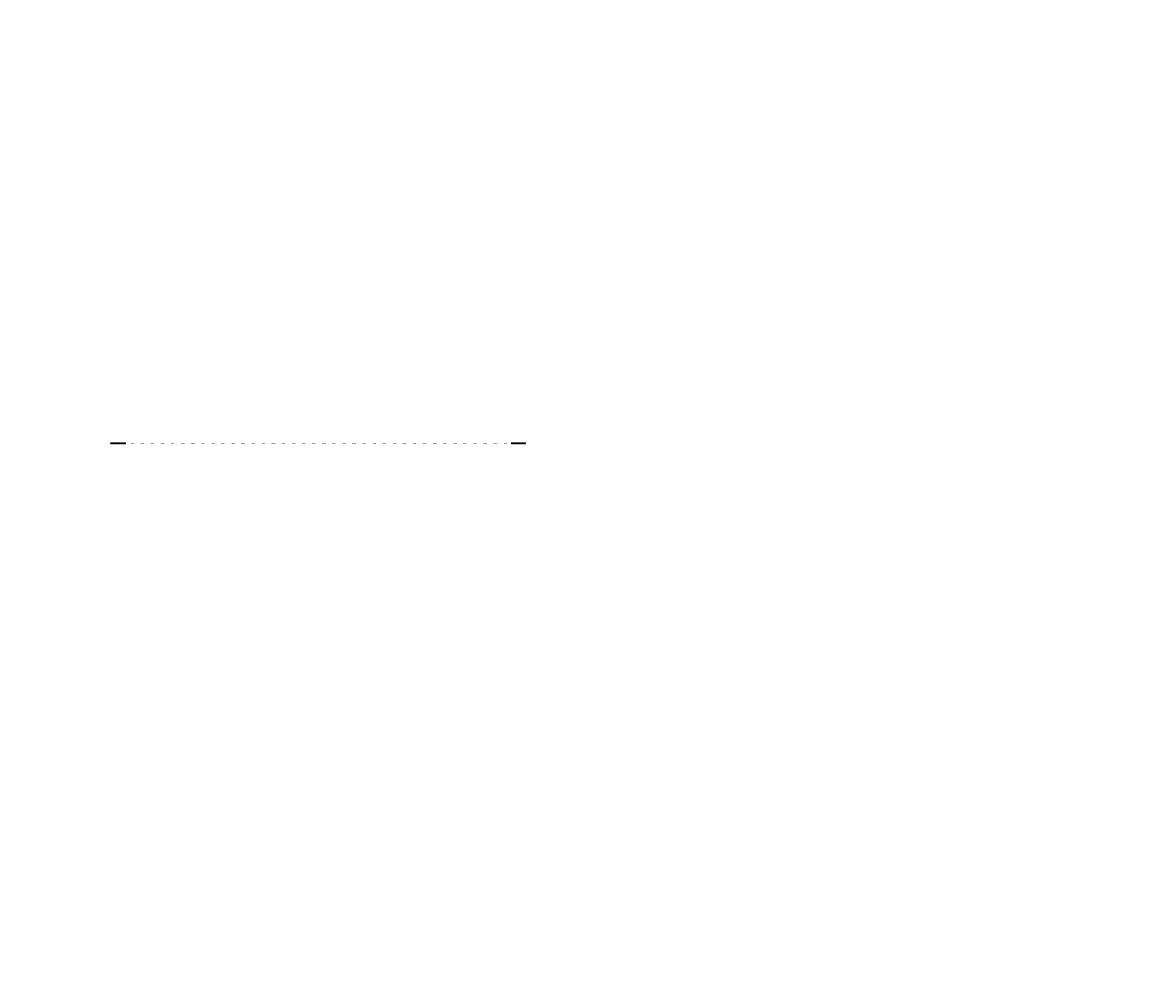
    \caption{Comparisons of our new algorithm (\Alg{Hybrid}) and several existing algorithms with $d=10, m=5$ and $T=10^7$ under a) stochastic and b) stochastically constrained adversarial setting. The left side is in linear scale and the right is in log-log scale.}
    \label{fig:both}
\end{figure}


The plots clearly confirm our theoretical results.
Our algorithm outperforms \Alg{Exp2} and \Alg{LogBarrier} (in the later stage) in both environments.
In the stochastic case our algorithm is competitive with \Alg{CombUCB}, 
while \Alg{Thompson Sampling} has the best performance (a well-known phenomenon).
However, these two stochastic algorithms clearly fail in the adversarial case
and exhibit nearly-linear regret.

\section{Proof of Theorem~\ref{th:arbitrary}}
\label{sec:proof}

We provide the key steps of the proof for our general result (Theorem~\ref{th:arbitrary}) in this section.
Define $\Psi_t(\cdot) = \eta_t^{-1}\Psi(\cdot)$ 
and potential function
$\Phi_t(\cdot) = \max_{ x \in \conv(\mathcal{X})} \left\langle x,\cdot\right\rangle - \Psi_t( x)$,
which is the convex conjugate of $\Psi_t+  \mathcal{I}_{\conv(\mathcal X)}$.

Following a standard analysis of FTRL, we decompose the regret
\begin{align}
    &\Reg_T = \underbrace{\E\left[\sum_{t=1}^T\scalar{X_t}{\ell_{t}} + \Phi_t(-\hat L_t) - \Phi_t(-\hat L_{t-1})\right]}_{\Regstab} \nonumber\\
    &+ \underbrace{\E\left[\sum_{t=1}^T -\Phi_t(-\hat L_t) + \Phi_t(-\hat L_{t-1}) - \scalar{x^*}{\ell_{t}}\right],}_{\Regpen}
\label{eq:regsplit}
\end{align}
into terms corresponding to the \textit{stability} and the \textit{regularization penalty} of the algorithm. 

We then further bound these two terms respectively in the following two lemmas using mostly standard \Alg{Ftrl} analysis (see Appendix~\ref{app:arbitrary} for the proofs).

\begin{lemma}
\label{lem:pen}
The regularization penalty is bounded as
\begin{align*}
\Regpen \leq &\sum_{t=1}^T \frac{3}{2\sqrt{t}}\Bigg(\sum_{i:x^*_i=0} \sqrt{\E[ x_{ti}]}\\
&-\sum_{i:x^*_i=1} \gamma(1-\E[ x_{ti}])\log(1-\E[ x_{ti}])\Bigg).
\end{align*}
\end{lemma}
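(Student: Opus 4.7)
The plan is to run the standard time-varying FTRL regularization analysis, then exploit the coordinate-wise structure of the hybrid regularizer to obtain the sharper form involving only $x_i^*=0$ (resp.\ $x_i^*=1$) terms. Starting from the definition
\begin{align*}
\Regpen = \E\Bigl[\textstyle\sum_{t=1}^T \Phi_t(-\hat L_{t-1}) - \Phi_t(-\hat L_t) - \langle x^*, \ell_t\rangle\Bigr],
\end{align*}
I first use $\E_t[\hat\ell_t]=\ell_t$ to replace $\ell_t$ by $\hat\ell_t$ inside the expectation, then telescope the $\Phi_t$ sum. This produces the boundary $\Phi_1(0) = -\min_x\Psi(x) = O(d)$, interior differences $\Phi_{t+1}(-\hat L_t) - \Phi_t(-\hat L_t)$ bounded by the one-line identity $\Phi_{t+1}(\theta) - \Phi_t(\theta) \leq \Psi_t(x_{t+1}) - \Psi_{t+1}(x_{t+1})$ (plug $x_{t+1}$ into the sub-optimal $\Phi_t$), and the endpoint $-\Phi_T(-\hat L_T) \leq \langle x^*, \hat L_T\rangle + \Psi_T(x^*)$ whose first part cancels $\sum_t \langle x^*, \hat\ell_t\rangle$. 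Since $\Psi_t = \eta_t^{-1}\Psi$, a rearrangement with the convention $\eta_0^{-1}=0$ yields the clean intermediate bound
\begin{align*}
\Regpen \leq O(d) + \sum_{t=2}^{T}\bigl(\eta_t^{-1} - \eta_{t-1}^{-1}\bigr)\,\E[\Psi(x^*) - \Psi(x_t)].
\end{align*}

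Next I bound $\Psi(x^*) - \Psi(x_t)$ coordinate-wise. For $i$ with $x_i^*=1$ the per-coordinate contribution is $-1 + \sqrt{x_{ti}} - \gamma(1-x_{ti})\log(1-x_{ti}) \leq -\gamma(1-x_{ti})\log(1-x_{ti})$, using $\sqrt{x_{ti}}\leq 1$. For $i$ with $x_i^*=0$ it equals $\sqrt{x_{ti}} - \gamma(1-x_{ti})\log(1-x_{ti})$, and I absorb the Shannon piece via the elementary scalar inequality
\begin{align*}
-(1-z)\log(1-z) \leq \tfrac{1}{2}\sqrt{z} \qquad (z\in[0,1]),
\end{align*}
easily verified by checking that $\sqrt{z}/2 + (1-z)\log(1-z)$ is non-negative on $[0,1]$ via one-variable calculus (the minimum sits slightly above $0$ near $z\approx 0.48$). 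With $\gamma\leq 1$, this gives $\sqrt{x_{ti}} - \gamma(1-x_{ti})\log(1-x_{ti}) \leq (1+\gamma/2)\sqrt{x_{ti}} \leq \tfrac{3}{2}\sqrt{x_{ti}}$.

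For the increments, $\eta_t^{-1}-\eta_{t-1}^{-1} = \sqrt{t}-\sqrt{t-1} \leq 1/\sqrt{t} \leq 3/(2\sqrt{t})$, so both coordinate groups can be unified under the factor $3/(2\sqrt{t})$. The expectation then passes inside the coordinate sums via Jensen's inequality: $\E[\sqrt{x_{ti}}] \leq \sqrt{\E[x_{ti}]}$ by concavity of $\sqrt{\cdot}$, and $\E[-(1-x_{ti})\log(1-x_{ti})] \leq -(1-\E[x_{ti}])\log(1-\E[x_{ti}])$ by convexity of $z\mapsto(1-z)\log(1-z)$ (whose second derivative is $1/(1-z)\geq 0$). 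Folding the additive $O(d)$ into the $t=1$ term of the sum (which is of the same order) recovers exactly the stated bound.

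The main obstacle is the first step: the FTRL telescoping must be executed carefully so that what appears in the sum is the difference $\Psi(x^*)-\Psi(x_t)$ rather than just $-\Psi(x_t)$. Using $x^*$ as a reference is essential, because it is the $-1$ contribution from $\Psi(x^*)_i$ at coordinates with $x_i^*=1$ that cancels $\sqrt{x_{ti}}$ and leaves only the Shannon piece; symmetrically, the Shannon contribution from $\Psi(x^*)_i$ vanishes at coordinates with $x_i^*=0$, forcing us to absorb the Shannon piece of $\Psi(x_t)_i$ into $\sqrt{x_{ti}}$ via the scalar inequality above. The sharpness of that inequality is the second, smaller point of care; a crude $-(1-z)\log(1-z)\leq z \leq \sqrt{z}$ would give a factor $2$ instead of $3/2$.
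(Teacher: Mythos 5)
Your approach is essentially the same as the paper's: a standard FTRL telescoping argument producing terms of the form $(\eta_t^{-1}-\eta_{t-1}^{-1})(\Psi(x^*)-\Psi(x_t))$, followed by the coordinate-wise split (using $\sqrt{x_{ti}}\leq 1$ to cancel the $-1$ from $\Psi(x^*)$ at $x_i^*=1$ coordinates, and the scalar inequality $-(1-z)\log(1-z)\leq\tfrac{1}{2}\sqrt{z}$ to absorb the Shannon piece into $\tfrac{3}{2}\sqrt{x_{ti}}$ at $x_i^*=0$ coordinates), then $\sqrt{t}-\sqrt{t-1}\leq 1/\sqrt{t}$, then Jensen. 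This mirrors the paper's Lemma~\ref{lem:pen-auxil} plus the proof of Lemma~\ref{lem:pen} in Appendix~\ref{app:arbitrary}, with only a cosmetic difference in how the telescoping is organized (you shift indices and compare $\Phi_{t+1}$ to $\Phi_t$ at a common point; the paper compares minimizers of consecutive objectives).

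There is one imprecision in your handling of the boundary term. You bound $\Phi_1(0)$ loosely as $-\min_x\Psi(x)=O(d)$ and then claim the resulting additive $O(d)$ can be ``folded into the $t=1$ term of the sum, which is of the same order.'' That step does not go through as stated: the $t=1$ term of the claimed bound, $\tfrac{3}{2}\bigl(\sum_{i:x_i^*=0}\sqrt{\E[x_{1i}]} - \gamma\sum_{i:x_i^*=1}(1-\E[x_{1i}])\log(1-\E[x_{1i}])\bigr)$, is \emph{at most} $O(d)$ but need not be $\Omega(d)$; if $x_{1i}$ happens to sit near $0$ or $1$ for many coordinates, it can be much smaller, and then a genuine $O(d)$ slack cannot be absorbed. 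The fix is to use the exact identity instead of a bound: since $\hat L_0=\mathbf{0}$, the algorithm picks $x_1=\argmin_{x\in\conv(\X)}\Psi(x)$, so $\Phi_1(0)=-\eta_1^{-1}\Psi(x_1)$ \emph{exactly}. Combined with the extra $\eta_1^{-1}\Psi(x^*)$ generated by your $\eta_0^{-1}=0$ convention, the boundary term is precisely $\eta_1^{-1}(\Psi(x^*)-\Psi(x_1))$ — i.e., the $t=1$ instance of the generic term $(\eta_t^{-1}-\eta_{t-1}^{-1})(\Psi(x^*)-\Psi(x_t))$ — and your coordinate-wise bound then applies uniformly to all $t\geq 1$, recovering the lemma without any extra constant. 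This is what the paper does implicitly in Lemma~\ref{lem:pen-auxil}.
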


\begin{lemma}
\label{lem:stab}
The stability term is bounded as
\begin{align*}
\Regstab \leq &\sum_{t=1}^T \frac{16\sqrt{2}}{\sqrt{t}}\Bigg(\sum_{i:x^*_i=0} \sqrt{\E[ x_{ti}]}\\
&+\sum_{i:x^*_i=1} \gamma^{-1}(1-\E[ x_{ti}])\Bigg)+c .  
\end{align*}
where $c = 58 m/\gamma^2$ (recall that $m =\max_{x\in \X}||x||_1$).
\end{lemma}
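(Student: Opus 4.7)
The plan is to follow the standard FTRL stability analysis and specialize it to the hybrid regularizer. Using first-order optimality of $x_t$ together with the unbiasedness $\E_t[\hat\ell_t]=\ell_t$, one gets $\E_t[\scalar{X_t}{\ell_t}]=\scalar{x_t}{\hat\ell_t}$ in conditional expectation, and rearranging the telescoping sum that defines $\Regstab$ shows
\[
\Regstab \;=\; \E\sum_{t=1}^T D_{\Phi_t}(-\hat L_t,-\hat L_{t-1}),
\]
where $D_{\Phi_t}$ is the Bregman divergence of the conjugate $\Phi_t=(\Psi_t+\mathcal{I}_{\conv(\X)})^*$. A second-order Taylor expansion together with the duality identity $\nabla^2\Phi_t(u)=(\nabla^2\Psi_t(\nabla\Phi_t(u)))^{-1}$ then gives $D_{\Phi_t}(-\hat L_t,-\hat L_{t-1})\le \tfrac{\eta_t}{2}\,\hat\ell_t^{\top}(\nabla^2\Psi(y_t))^{-1}\hat\ell_t$ for some intermediate $y_t\in \conv(\X)$ on the segment from $x_t$ to the next iterate. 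Because $\Psi$ is separable with $\Psi''_{ii}(y)=\tfrac{1}{4y_i^{3/2}}+\tfrac{\gamma}{1-y_i}$, this yields the coordinate-wise bound $(\nabla^2\Psi(y))^{-1}_{ii}\le \min\{4y_i^{3/2},\,\gamma^{-1}(1-y_i)\}$.

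The crux is to relate the intermediate point $y_t$ to the FTRL iterate $x_t$. Since $\hat\ell_{ti}$ can reach magnitude $O(1/x_{ti})$ when arm $i$ is sampled with small $x_{ti}$, a naive argument would allow $y_t$ to drift far from $x_t$ and the Taylor remainder would blow up. Following the strategy of \citet{zimmert2018optimal}, I would split each round into a ``well-behaved'' event on which $y_{ti}$ stays within a constant factor of $x_{ti}$ (so that substituting $x_{ti}$ for $y_{ti}$ only changes multiplicative constants) and a rare ``bad'' event on which one falls back to the deterministic bounds $y_{ti}\in[0,1]$ and $\hat\ell_{ti}\ge -1$. A careful analysis using local strong convexity of $\Psi$ on a shrunk box together with the learning rate $\eta_t=1/\sqrt t$ shows that the total bad-event contribution fits into the additive constant $c=58m/\gamma^2$; this is also where the $\gamma^{-2}$ factor arises, since the second derivative of the Shannon-on-complement part near $x_i=1$ is what limits the worst-case FTRL step in that direction.

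With $y_{ti}$ replaced by $x_{ti}$, a direct calculation using $|o_{ti}+1|\le 2$ yields $\E_t[\hat\ell_{ti}^2]\le 4/x_{ti}$, so $\E_t[\hat\ell_{ti}^2\cdot 4x_{ti}^{3/2}]\le 16\sqrt{x_{ti}}$ (Tsallis contribution) and $\E_t[\hat\ell_{ti}^2\cdot \gamma^{-1}(1-x_{ti})]\le 4\gamma^{-1}(1-x_{ti})/x_{ti}$ (Shannon contribution). To split by $x^*_i$ I simply pick the Tsallis branch of the $\min$ for coordinates with $x^*_i=0$ and the Shannon branch for coordinates with $x^*_i=1$; the residual ``mismatch'' cases (for instance $x^*_i=1$ but $x_{ti}$ small) are once again absorbed into $c$. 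Finally, Jensen's inequality $\E[\sqrt{x_{ti}}]\le \sqrt{\E[x_{ti}]}$, summation over $t$ with $\eta_t=1/\sqrt t$, and bookkeeping of the numerical constants from the Taylor remainder together reproduce the leading factor $16\sqrt 2/\sqrt t$ stated in the lemma.

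The main obstacle is the second paragraph. Our regularizer is not Legendre on $[0,1]^d$ (the Shannon part has bounded derivative at $x_i=0$), and the loss estimator is unbounded on one side, so a single clean local-norm bound does not go through; most of the technical work is in the delicate case split between typical and atypical rounds and in the accounting that folds the atypical mass into the clean additive constant $c=58m/\gamma^2$ rather than letting it pollute the leading $1/\sqrt t$ factor.
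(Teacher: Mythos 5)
Your proposal diverges from the paper's proof in ways that create two genuine gaps. First, the claimed identity $\nabla^2\Phi_t(u)=(\nabla^2\Psi_t(\nabla\Phi_t(u)))^{-1}$ does not hold: $\Phi_t$ is the conjugate of $\Psi_t+\mathcal{I}_{\conv(\X)}$, which is not smooth where the constraint is active, so $\Phi_t$ is not twice differentiable with that Hessian. The identity you want is true for the \emph{unconstrained} conjugate $\Psi_t^*$ only. The paper's Lemma~\ref{lem:divergence_ineq} is exactly the device that bridges this: it proves $D_{\Phi_t}(L+\ell,L)\le D_{\Psi_t^*}(\tilde L+\ell,\tilde L)$ with $\tilde L=\nabla\Psi_t(\nabla\Phi_t(L))$, via a first-order-optimality argument at the projected point, and only then is the Taylor/Hessian computation carried out on $\Psi_t^*$. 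Replacing that step by a Hessian formula for $\Phi_t$ that is false is a real hole, not a bookkeeping issue.

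Second, your split into ``well-behaved'' versus rare ``bad'' events is not how the paper (or \citet{zimmert2018optimal}) controls the intermediate point, and it is more complicated than what is actually needed. The estimator's $+1$ shift gives $\hat\ell_{ti}\ge -1$ \emph{surely}, and combined with $\hat\ell_{ti}\le 2/x_{ti}$ this yields, via Lemma~\ref{lem:tilde_alpha_bound}, that whenever $\eta_t\le\min\{\tfrac{\sqrt2-1}{2},\tfrac{\gamma\log 2}{4}\}$ the unconstrained next point always satisfies $\tilde x_{ti}\in[2x_{ti}-1,2x_{ti}]$, deterministically and on every round. Hence there are no ``bad'' rounds once $t\ge t_0=\Theta(\gamma^{-2})$; the additive constant $c$ comes solely from the first $t_0$ rounds, each bounded trivially by $2m$. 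Likewise, there is no ``mismatch'' mass to absorb into $c$: the paper shows the per-coordinate bound $\tfrac{4}{x_{ti}}\min\{4\sqrt{(2x_{ti})^3},\,2\gamma^{-1}(1-x_{ti})\}\le 32\sqrt2\,\min\{\sqrt{x_{ti}},\,\gamma^{-1}(1-x_{ti})\}$ holds simultaneously for both branches, so one just keeps the branch matching $x^*_i$ with nothing left over. Your probabilistic route might be salvageable, but as written it rests on a false identity and omits the deterministic bounded-step lemma that is the technical heart of the proof.
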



We now proceed to the proof of Theorem~\ref{th:arbitrary}.
\begin{proof}[Proof of Theorem~\ref{th:arbitrary}]
Using Lemma~\ref{lem:pen} and Lemma~\ref{lem:stab} in Eq.~\eqref{eq:regsplit} and the definition of functions $f$ and $g$, we can bound the regret by
\begin{align}
\Reg_T &\leq \sum_{t=1}^T \frac{25}{\sqrt{t}}\left(f(\E[x_{t}])+g(\E[x_{t}])\right)+c  \label{eq:reg_bound} \\
&\leq 50\sqrt{T} \max_{x\in\conv(\X)}\left(f(x)+g(x)\right)+c \notag\\
&= \mathcal{O}\left(C_{adv}\sqrt{T}\right), \notag
\end{align}
which concludes the adversarial case.

For the stochastic case we use a {\it self-bounding} technique similar to~\citet{wei2018more, zimmert2018optimal}.
First, by the definition of the function $r$ and the stochastic assumption we have
\begin{align*}
\Reg_T = \E\left[\sum_{t=1}^T\scalar{\E[x_t] - x^*}{\ell_t}\right] \geq \sum_{t=1}^T r(P(\E[x_t])).
\end{align*}
Together with Eq.~\eqref{eq:reg_bound} we have
\begin{align*}
\sum_{t=1}^T \frac{25}{\sqrt{t}}\left(f(\E[x_{t}])+g(\E[x_{t}])\right)+c - \sum_{i=1}^T r(P(\E[x_t])) \geq 0. 
\end{align*}

Combining the above with Eq.~\eqref{eq:reg_bound} again we bound $\Reg_T$ by
\begin{align*}
\sum_{t=1}^T\left( \frac{50}{\sqrt{t}}\left(f(\E[x_{t}])+g(\E[x_{t}])\right)- r(P(\E[x_t]))\right)+2c.
\end{align*}

We next decompose the summation above into two terms and upper bound them as $C_{sto}\log T$ and $C_{add}$ respectively:
\begin{align*}
&\sum_{t=1}^T\frac{50}{\sqrt{t}}f(\E[x_{t}])- \frac{1}{2}r(P(\E[x_t]))\\
&\leq \sum_{t=1}^T  \max_{\alpha\in\Delta(\X)} \frac{50}{\sqrt{t}}f(\overline{\alpha})-\frac{1}{2}r(\alpha)\\
&\leq\sum_{t=1}^T  \max_{\alpha\in[0,\infty)^{|\X|}} \frac{50}{\sqrt{t}}f\left(\frac{10^4}{t}\overline{\alpha}\right)-\frac{1}{2}r\left(\frac{10^4}{t}\alpha\right)\\
&\stackrel{(\star)}{=} \sum_{t=1}^T \frac{10^4}{2t}\max_{\alpha\in[0,\infty)^{|\X|}}f(\overline{\alpha})-r(\alpha)= \mathcal{O}\left(C_{sto}\log(T)\right) 
\end{align*}
where $(\star)$ follows since $r$ is linear and $f$ satisfies for any scalar $a\geq 0$: $f(ax)=\sqrt{a}f(x)$. 
On the other hand, 
\begin{align*}
&\sum_{t=1}^T\frac{50}{\sqrt{t}}g(\E[x_{t}])- \frac{1}{2}r(P(\E[x_t])) \\
&\leq \frac{1}{2}\sum_{t=1}^\infty  \max_{\alpha\in \Delta(\X)} \left(\frac{100}{\sqrt{t}}g(\overline{\alpha})- r(\alpha)\right)=\mathcal{O}(C_{add}),
\end{align*}
where the last inequality uses the fact: for all $t>0$, $\max_{\alpha\in \Delta(\X)}\left(\frac{100}{\sqrt{t}}g(\overline{\alpha})- r(\alpha)\right)\geq 0$. This is because a particular  $\alpha$ that puts all the weight on $x^*$ attains the value of $0$. 

The above finishes the proof of the general regret bounds.
Due to space limitations we defer the derivation of upper bounds on the constants $C_{sto}, C_{add}$ and $C_{adv}$ to Appendix~\ref{app:arbitrary}.
\end{proof}

\section{Extensions to Bandit Feedback}
\label{sec:ext}
The most natural extension of our work is to consider the full bandit feedback setting,
where each time after playing an action $X_t$ the learner only observes $\scalar{X_t}{\ell_t}$.
Again, both stochastic and adversarial versions of the problem are well-studied in the literature,
but there is no best-of-both-worlds result.
Here, we provide a preliminary result for the simplest case $\X = \{0, 1\}^d$.
Following convention for this setting we also restrict $\ell_t$ to be such that $\|\ell_t\|_1\leq 1$.
Similar to Section~\ref{sec:hypercube},
in the stochastic case we assume $\ell_t \sim \mathcal{D}$
and define $\Delta_i = \E_{\ell\sim\mathcal{D}}[\ell_i]$.

\begin{theorem}
For the full bandit feedback setting with $\X = \{0, 1\}^d$ and $\|\ell_t\|_1\leq 1$,
\Alg{Ftrl} with regularizer $\Psi(x) = \sum_{i=1}^d \sqrt{x_i}+\sqrt{1-x_i}$, learning rate $\eta_t=1/\sqrt{t}$ and loss estimators $\hat\ell_{ti} = \frac{\scalar{X_t}{\ell_t}X_{ti}}{x_{ti}} - \frac{\scalar{X_t}{\ell_t}(1-X_{ti})}{1-x_{ti}}$ ensures:
\begin{align*}
\Reg_T \leq \mathcal{O}\left(\sum_{i: \Delta_i \neq 0} \frac{\log(T)}{|\Delta_i|}\right)
\end{align*}
in the stochastic case and
\begin{align*}
\Reg_T \leq \mathcal{O}\left(d\sqrt{T}\right)
\end{align*}
in the adversarial case. Moreover, both bounds are optimal.
\end{theorem}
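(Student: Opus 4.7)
The plan is to mirror the proof of Theorem~\ref{th:arbitrary}: apply the standard \Alg{Ftrl} decomposition $\Reg_T = \Regstab + \Regpen$ from Eq.~\eqref{eq:regsplit}, derive penalty and stability bounds of the form $\sum_t \frac{C}{\sqrt{t}} F(\E[x_t])$ for some function $F$ that captures the regularizer's behaviour, then peel off the adversarial bound by maximising $F$ over $\conv(\X)=[0,1]^d$ and the stochastic bound via the self-bounding trick. The new ingredient is the symmetric Tsallis regularizer $\sqrt{x_i}+\sqrt{1-x_i}$, which is the Tsallis-$\tfrac12$ entropy applied both to $x_i$ and to $1-x_i$. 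The intuition is exactly the one in the discussion of the hybrid regularizer for semi-bandits, but now symmetric: in the full-bandit setting neither the ``in'' side nor the ``out'' side of an arm receives deterministic feedback, so both sides must be regularized as bandit problems with the optimal Tsallis-$\tfrac12$ regularizer.

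First I would verify that $\E_t[\hat\ell_{ti}] = \ell_{ti}$, which follows from independence of the coordinates of $X_t$ under $P(x_t)=\bigotimes_i \mathrm{Ber}(x_{ti})$ together with a direct expansion of $\scalar{X_t}{\ell_t}X_{ti}$ and $\scalar{X_t}{\ell_t}(1-X_{ti})$. Next, for $\Regpen$ a telescoping argument analogous to Lemma~\ref{lem:pen} yields a bound proportional to $\sum_t \frac{1}{\sqrt{t}}\sum_i (\sqrt{\E[x_{ti}]}+\sqrt{1-\E[x_{ti}]})$, using $\Psi\leq d\sqrt{2}$ on $[0,1]^d$. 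For $\Regstab$ I would use a local-norm argument: the Hessian $\nabla^2\Psi(x)=\tfrac14\diag(x_i^{-3/2}+(1-x_i)^{-3/2})$ is designed to match the variance profile $\E_t[\hat\ell_{ti}^2]=\mathcal{O}(1/x_{ti}+1/(1-x_{ti}))$, which uses $|\scalar{X_t}{\ell_t}|\leq 1$. The dual-norm squared then reduces to the same form as the penalty, up to a constant additive term handled exactly as in Lemma~\ref{lem:stab}.

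For the adversarial case, combining the two estimates gives $\Reg_T=\mathcal{O}(\sqrt{T})\max_{x\in[0,1]^d}\sum_i(\sqrt{x_i}+\sqrt{1-x_i})=\mathcal{O}(d\sqrt{T})$. For the stochastic case, since $x^*_i=\mathbb{I}\{\Delta_i<0\}$, the instantaneous regret factors coordinate-wise as $\sum_{i:\Delta_i>0}\Delta_i\E[x_{ti}]+\sum_{i:\Delta_i<0}|\Delta_i|(1-\E[x_{ti}])$, so the self-bounding inequality applies separately per coordinate: $\frac{C}{\sqrt{t}}\sqrt{\E[x_{ti}]}-\tfrac12\Delta_i\E[x_{ti}]\leq \mathcal{O}(1/(\Delta_i^2 t))$ when $\Delta_i>0$ (and symmetrically on the $\sqrt{1-x_{ti}}$ side when $\Delta_i<0$), summing in $t$ to $\mathcal{O}(\log T/|\Delta_i|)$. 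Optimality follows from the $\Omega(d\sqrt T)$ lower bound for $d$-armed bandits in the adversarial regime, and from the two-armed bandit lower bound embedded coordinate-wise as in Theorem~\ref{th:full} in the stochastic regime.

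The main obstacle will be the stability analysis, because the full-bandit estimator $\hat\ell_{ti}$ depends on the aggregate $\scalar{X_t}{\ell_t}$ rather than on a single-arm observation, so its second moment carries cross-arm contributions that must be controlled using $\|\ell_t\|_1\leq 1$ together with the independence structure of $X_t$. Moreover, the negative summand $-\scalar{X_t}{\ell_t}(1-X_{ti})/(1-x_{ti})$ is unbounded near $x_{ti}=1$ in the same way that the positive summand is unbounded near $x_{ti}=0$, which is precisely why the symmetric regularizer is essential: its Hessian blows up on both faces of the cube and matches the estimator's two-sided variance. Managing non-Legendre behaviour on the boundary and absorbing the residual additive constant into the $\mathcal{O}(\cdot)$ will require the same large-deviation argument used in the proofs of Lemmas~\ref{lem:pen} and~\ref{lem:stab}.
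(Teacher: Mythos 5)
The paper proves this theorem by a reduction rather than by a fresh \Alg{Ftrl} analysis. Because $\conv(\X)=[0,1]^d$ and the regularizer is separable, the \Alg{Ftrl} optimization decouples coordinate by coordinate, and for each $i$ the resulting one-dimensional problem is literally a two-armed bandit: arm $1$ (play $X_{ti}=1$) has loss $\ell_{ti}+\sum_{j\neq i}X_{tj}\ell_{tj}$, arm $2$ has loss $\sum_{j\neq i}X_{tj}\ell_{tj}$, the play probability is $x_{ti}$, and the observation $\scalar{X_t}{\ell_t}$ is exactly the chosen arm's loss. The estimator $\hat\ell_{ti}$ is the difference of the two importance-weighted estimates, and $\sqrt{x_i}+\sqrt{1-x_i}$ is the Tsallis-$\tfrac12$ potential on the two-point simplex. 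So the coordinate-$i$ subroutine is exactly the Zimmert--Seldin algorithm for a two-armed stochastically-constrained adversarial bandit with gap $|\Delta_i|$, and their Theorem~2 is invoked as a black box; summing over coordinates finishes the proof. The paper's proof sketch contains nothing more.

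Your proposal takes a genuinely different route: it re-derives the bound from scratch, mirroring the Theorem~\ref{th:arbitrary} machinery (the $\Regpen/\Regstab$ split, local norms via the new symmetric Hessian, the self-bounding trick). The computations you outline check out: $\E_t[\hat\ell_{ti}^2]\leq 1/x_{ti}+1/(1-x_{ti})$ matches the Hessian $\tfrac14(x_i^{-3/2}+(1-x_i)^{-3/2})$ so the per-coordinate local norm is $\mathcal{O}(\sqrt{x_{ti}}+\sqrt{1-x_{ti}})$, the penalty telescopes to $\sum_i(\sqrt{x_{ti}}+\sqrt{1-x_{ti}}-1)\leq\sum_{i:x^*_i=0}\sqrt{x_{ti}}+\sum_{i:x^*_i=1}\sqrt{1-x_{ti}}$, and the self-bounding argument then closes both cases. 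What this route costs you relative to the paper is having to prove a two-sided analogue of Lemma~\ref{lem:tilde_alpha_bound}: that lemma's hypothesis $\hat\ell_i\geq-1$ fails here since $\hat\ell_{ti}$ can be as negative as $-1/(1-x_{ti})$, and you correctly identify that the symmetry of the regularizer is precisely what makes a two-sided stability window provable. What it buys you is transparency about the exact boundedness conditions that matter; the paper's one-line appeal to Zimmert--Seldin is terse on this point (their analysis is stated for losses in $[0,1]$, whereas the induced two-armed losses here live in $[-1,1]$ and the estimator is the unshifted difference).

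Two corrections on the optimality claims. The $\Omega(d\sqrt{T})$ adversarial lower bound is not the $d$-armed bandit lower bound, which is $\Omega(\sqrt{dT})$; the paper invokes the linear-bandit lower bound on the hypercube due to \citet{dani2008price}. And the stochastic lower bound cannot simply be obtained by embedding $d$ independent two-armed bandit instances coordinate-wise, because under bandit feedback the observation $\scalar{X_t}{\ell_t}$ aggregates all coordinates and the per-coordinate signals contaminate one another; the paper instead invokes the general instance-dependent lower bound of \citet{combes2017minimal}.
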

\begin{proof}[Proof sketch.]
In this case, the optimization of \Alg{Ftrl} decomposes over the coordinates and
it is clear that the stated algorithm is equivalent to the following:
for each coordinate $i$, apply the algorithm of~\citet{zimmert2018optimal} to a two-armed bandit problem where the loss of arm 1 at time $t$ is $\ell_{ti}+\sum_{j\neq i}X_{tj}\ell_{tj}$ and the loss of arm 2 is $\sum_{j\neq i}X_{tj}\ell_{tj}$.\footnote{%
The losses are well defined since they do not depend on $X_{ti}$.
}
In the stochastic case this exactly fits into the stochastically constrained adversarial setting of~\citet{zimmert2018optimal} with gap $|\Delta_i|$ and, therefore, applying their Theorem~2 and summing up the regret over each coordinate finishes the proof for the stated regret bounds.
The optimality of the stochastic bound follows from~\citet{combes2017minimal} and the optimality of the adversarial bound follows from~\citet{dani2008price}.
%
\end{proof}

For general action sets, however, the problem becomes significantly harder,
because all known adversarial algorithms, e.g.~\citet{cesa2012combinatorial}, require implicit or explicit exploration of order $1/\sqrt{T}$, which prohibits $\log(T)$ regret in the stochastic case.
We leave this as question for future work.

\section{Conclusions}
We provide the first best-of-both-worlds results for combinatorial bandits,
via an \Alg{Ftrl}-based algorithm with a novel hybrid regularizer.
Our bounds are worst-case optimal and also optimal in two particular instances of the problem.
Empirical evaluations also confirm our theory.

Other than the open problem under bandit feedback mentioned in Section~\ref{sec:ext},
another open question is whether our stochastic bound is instance-optimal as in~\citet{combes2017minimal}, and if not, whether there is a best-of-both-worlds algorithm that is instance-optimal in the stochastic case.
One can also ask the same question for the adversarial case, 
however, next to nothing is known regarding the instance-optimality of the adversarial case,
let alone best-of-both-worlds results.


\paragraph{Acknowledgments}
HL and CYW are supported by NSF Grant \#1755781.
We thank Yevgeny Seldin for valuable feedback and discussions,
and Shinji Ito for pointing us to missing references and unclarities in Section~\ref{sec:ext}.

\bibliography{bibliography,mybib}
\bibliographystyle{icml2019}

\onecolumn
\appendix

\section{Omitted details for the Proof of Theorem~\ref{th:arbitrary}}
\label{app:arbitrary}

In this section we provide omitted details for the proof of Theorem~\ref{th:arbitrary}.
We first prove Lemmas~\ref{lem:pen} and~\ref{lem:stab},
then continue on Section~\ref{sec:proof} and prove the upper bounds for $C_{sto}$, $C_{add}$ and $C_{adv}$.

\subsection{Regularization penalty}
In order to bound the regularization penalty, we make use of the following standard result for \Alg{Ftrl}.
\begin{lemma}
\label{lem:pen-auxil}
The penalty term defined in Eq.~\eqref{eq:regsplit} is upper bounded by
\begin{align*}
    \Regpen \leq \E\left[ \frac{-\Psi( x_1)+\Psi( x^*)}{\eta_1} +\sum_{t=2}^T (\eta_{t}^{-1}-\eta_{t-1}^{-1})\left(-\Psi( x_t)+\Psi( x^*)\right)\right].
\end{align*}
\end{lemma}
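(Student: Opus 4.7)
The plan is to use the standard Abel summation (telescoping) trick for \Alg{Ftrl} with time-varying regularization, treating $\Phi_t$ as the convex conjugate of $\Psi_t + \mathcal{I}_{\conv(\X)}$. First I would rewrite the telescoping sum
\[
\sum_{t=1}^T\bigl[-\Phi_t(-\hat L_t)+\Phi_t(-\hat L_{t-1})\bigr]
= -\Phi_T(-\hat L_T)+\Phi_1(-\hat L_0)+\sum_{t=2}^T\bigl[\Phi_t(-\hat L_{t-1})-\Phi_{t-1}(-\hat L_{t-1})\bigr],
\]
which is simply a reindexing of the sums $\sum_t-\Phi_t(-\hat L_t)$ and $\sum_t\Phi_t(-\hat L_{t-1})$.

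Next I would bound each of the three groups separately. For the initial term, since $\hat L_0=\mathbf{0}$, the definition of $\Phi_1$ gives $\Phi_1(0)=-\min_{x\in\conv(\X)}\eta_1^{-1}\Psi(x)=-\eta_1^{-1}\Psi(x_1)$. For the terminal term, using $x^*$ as a feasible but not necessarily optimal point in the max defining $\Phi_T$ yields $\Phi_T(-\hat L_T)\geq\langle x^*,-\hat L_T\rangle-\eta_T^{-1}\Psi(x^*)$, so $-\Phi_T(-\hat L_T)\leq\langle x^*,\hat L_T\rangle+\eta_T^{-1}\Psi(x^*)$. For each cross-term, I would use that $x_t$ is the argmax defining $\Phi_t(-\hat L_{t-1})$ while it is merely a feasible point for $\Phi_{t-1}(-\hat L_{t-1})$; the linear parts $\langle x_t,-\hat L_{t-1}\rangle$ cancel, leaving the regularization difference
\[
\Phi_t(-\hat L_{t-1})-\Phi_{t-1}(-\hat L_{t-1})\leq-(\eta_t^{-1}-\eta_{t-1}^{-1})\Psi(x_t).
\]

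Then I would re-telescope $\eta_T^{-1}\Psi(x^*)=\eta_1^{-1}\Psi(x^*)+\sum_{t=2}^T(\eta_t^{-1}-\eta_{t-1}^{-1})\Psi(x^*)$, combine it with the initial and cross-term bounds, and group the regularization contributions as $\eta_1^{-1}(\Psi(x^*)-\Psi(x_1))+\sum_{t=2}^T(\eta_t^{-1}-\eta_{t-1}^{-1})(\Psi(x^*)-\Psi(x_t))$. Finally, taking expectations in~\eqref{eq:regsplit} and using the unbiasedness $\E[\hat L_T]=L_T=\sum_t\ell_t$ cancels the $\langle x^*,\hat L_T\rangle$ term against $-\sum_t\langle x^*,\ell_t\rangle$, yielding the stated bound.

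The argument is entirely mechanical once the telescoping split is set up, so there is no deep obstacle; the only thing to be careful about is sign bookkeeping, in particular the fact that $\Phi_t$ is defined using $-\Psi_t$ (so it is a max, not a min) and that $\eta_t^{-1}-\eta_{t-1}^{-1}\geq 0$ for the decreasing schedule $\eta_t=1/\sqrt{t}$, which means the cross-terms contribute $\Psi(x^*)-\Psi(x_t)\geq 0$ with a nonnegative coefficient, as expected.
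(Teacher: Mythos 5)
Your proof is correct and takes essentially the same route as the paper: both arguments reduce the penalty to a telescoping sum and then exploit that $x_t$ is the exact maximizer of $\Phi_t(-\hat L_{t-1})$ but only a feasible point for the neighboring potential (the paper shifts the index forward and bounds the $\min$ at $x_{t+1}$; you shift backward and bound $\Phi_{t-1}(-\hat L_{t-1})$ from below at $x_t$ — the same cancellation of linear terms). The Abel-summation reindexing, the re-telescoping of $\eta_T^{-1}\Psi(x^*)$, and the cancellation of $\langle x^*,\hat L_T\rangle$ under expectation via unbiasedness all match the paper's computation.
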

\begin{proof}
We proceed as follows:
\begin{align*}
& \sum_{t=1}^T  \left(-\Phi_t(-\hat{L}_t) + \Phi_t(-\hat{L}_{t-1}) - \langle x^*, \hat{\ell}_t \rangle  \right)\\
&= \sum_{t=1}^T \left( \min_{x\in \conv(\mathcal{X})}\left\{  \langle x, \hat{L}_{t} \rangle + \eta_t^{-1}\Psi(x) \right\} - \left(  \langle x_t, \hat{L}_{t-1} \rangle  + \eta_t^{-1}\Psi(x_t) \right) \right\} -  \sum_{t=1}^T  \langle x^*, \hat{\ell}_t \rangle   \tag{by the definitions of $\Phi_t$ and $x_t$}\\
&\leq \langle x^*, \hat{L}_T \rangle + \eta_T^{-1}\Psi(x^*) + \sum_{t=1}^{T-1} \left(\langle x_{t+1}, \hat{L}_{t} \rangle + \eta_t^{-1}\Psi(x_{t+1})\right) - \sum_{t=1}^T \left(\langle x_t, \hat{L}_{t-1} \rangle + \eta_t^{-1}\Psi(x_t) \right)  - \langle x^*, \hat{L}_T \rangle \\
& = \eta_T^{-1}\Psi(x^*) + \sum_{t=2}^T \eta_{t-1}^{-1}\Psi(x_t) - \sum_{t=1}^T \eta_t^{-1}\Psi(x_t)   \tag{by telescoping and $\hat{L}_0=\mathbf{0}$} \\
& = \frac{-\Psi( x_1)+\Psi( x^*)}{\eta_1} +\sum_{t=2}^T (\eta_{t}^{-1}-\eta_{t-1}^{-1})\left(-\Psi( x_t)+\Psi( x^*)\right). 
\end{align*}
Finally using $\E\left[\ell_t \right] = \E[\hat{\ell}_t]$ and plugging in the definition of $\Regpen$ finish the proof. 
\end{proof}

\begin{proof}[Proof of Lemma~\ref{lem:pen}]
We directly plug into Lemma~\ref{lem:pen-auxil} the learning rate $\eta_t = \LearningRate$ and the regularizer $\Psi(x)=\Regularizer$.
Since $\gamma\leq 1$ and $-(1-x) \log(1-x)\leq \frac{\sqrt{x}}{2}$ for $x\in[0,1]$, we get
\begin{align*}
-\Psi( x_t)+\Psi( x^*) &= \sum_{i=1}^d \sqrt{x_{ti}}-\gamma(1-x_{ti})\log(1-x_{ti}) -\sum_{i:x^*_i=1} \sqrt{1} \nonumber\\
&\leq \sum_{i:x^*_i=0} \frac{3}{2}\sqrt{x_{ti}}-\sum_{i:x^*_i=1}\gamma(1-x_{ti})\log(1-x_{ti}) \\
&\leq \frac{3}{2}\left(\sum_{i:x^*_i=0} \sqrt{x_{ti}}-\sum_{i:x^*_i=1}\gamma(1-x_{ti})\log(1-x_{ti})\right).
\end{align*}
It further holds that $\eta_1=\eta_1^{-1}$ and 
\begin{align*}
\eta_t^{-1}-\eta_{t-1}^{-1} = \sqrt{t}-\sqrt{t-1}\leq \frac{1}{2\sqrt{t-1}}\leq \frac{1}{\sqrt{t}} = \eta_t.
\end{align*}
Inserting everything into Lemma~\ref{lem:pen-auxil}:
\begin{align*}
    \Regpen &\leq \E\left[ \frac{-\Psi( x_1)+\Psi( x^*)}{\eta_1} +\sum_{t=2}^T (\eta_{t}^{-1}-\eta_{t-1}^{-1})\left(-\Psi( x_t)+\Psi( x^*)\right)\right]\\
&\leq \E\left[\sum_{t=1}^T \eta_t \left(-\Psi( x_t)+\Psi( x^*)\right)\right]\\
& \leq \E\left[\sum_{t=1}^T \frac{3}{2\sqrt{t}}\left(\sum_{i:x^*_i=0} \sqrt{x_{ti}}-\sum_{i:x^*_i=1}\gamma(1-x_{ti})\log(1-x_{ti})\right)\right]\\
&\leq \sum_{t=1}^T \frac{3}{2\sqrt{t}}\left(\sum_{i:x^*_i=0} \sqrt{\E[x_{ti}]}-\sum_{i:x^*_i=1}\gamma(1-\E[x_{ti}])\log(1-\E[x_{ti}])\right).
\end{align*}
where the last step follows from Jensen's inequality and the concavity of functions $\sqrt{x}$ and $-(1-x)\log(1-x)$.
\end{proof}

\subsection{Stability term}
Bounding the stability term defined in Eq.~\eqref{eq:regsplit} requires tools from convex analysis.
First we extend the domain of $\Psi$ to $\mathbb{R}^d$ by setting $\Psi(x) = \infty, \;\forall x\in \mathbb{R}^d\setminus [0,1]^d$.
Recall the convex conjugate of a convex function $f$ is defined as
\begin{align*}
f^*(\cdot) = \max_{x\in\mathbb{R}^d} \scalar{x}{\cdot} - f(x),
\end{align*}
and the {\it Bregman divergence} associated with $f$ is defined as
\begin{align*}
D_{f} (x,y) = f(x)-f(y) -\scalar{\nabla f(y)}{x-y}.
\end{align*}
By the above definition, $\Phi_t$ can be written as $(\Psi_t + \mathcal{I}_{\conv}(\mathcal{X}))^{*}$. Note that $\Psi_t^*$ differs from $\Phi_t$ because it does not constrain its maximizer to be within $\conv(\X)$. 
The following properties hold (see, e.g., Chapter 7 of~\cite{bertsekas2003convex}): 
\begin{align}
    \nabla\Phi_t(\cdot) &= \argmax_{x\in\conv(\mathcal{X})}\scalar{x}{\cdot} - \Psi_t(x),   \label{eqn:legendre property 1} \\
    \nabla\Psi^*_t(\cdot) &= \argmax_{x\in[0,1]^d}\scalar{x}{\cdot} - \Psi_t(x).   \label{eqn:legendre property 2}
\end{align}
For $\Psi_t$ and $\Psi_t^*$, we have
\begin{align}
    &\nabla\Psi_t = (\nabla\Psi_t^*)^{-1}, \label{eqn:legendre property 3} \\    
    &\nabla^2 \Psi_t(x) = \left(\nabla^2\Psi_t^*(\nabla\Psi_t(x))\right)^{-1}. \label{eqn:legendre property 4}
\end{align} 
Furthermore, by Taylor's theorem, for any $x,y \in \mathbb{R}^d$ there exists a $z \in \conv(\{x,y\})$ such that
\begin{align}
    D_{\Psi^*_t}(x,y) = \frac{1}{2}||x-y||^2_{\nabla^2\Psi^*_t(z)}\label{eq:breg-2nd}.
\end{align}

The explicit expressions for $\nabla\Psi_t, \nabla^2 \Psi_t$ and a convenient upper bound for $(\nabla^2 \Psi_t)^{-1}$ in the domain $(0,1)^d$ are
\begin{align}
\Psi_t(x) &= \eta_t^{-1}\left(\Regularizer\right), \notag\\
\nabla\Psi_t(x) &= \eta_t^{-1} \left( -\frac{1}{2\sqrt{x_i}}-\gamma\log(1-x_i)-\gamma\right)_{i=1,\dots,d}, \notag\\
\nabla^2\Psi_t(x)  &= \eta_t^{-1} \diag\left[\left(\frac{1}{4\sqrt{x_i^3}}+\frac{\gamma}{1-x_i}\right)_{i=1,\dots,d}\right],\label{eq:hessian}\\
\left(\nabla^2\Psi_t(x)\right)^{-1} &\preceq \eta_t \diag\left[\left(\min\left\{4\sqrt{x_i^3},\gamma^{-1}(1-x_i)\right\}\right)_{i=1,\dots,d}\right],  \label{eq:inv-hessian}
\end{align}
where $(v_i)_{i=1,\ldots,d}$ denotes $(v_1, \ldots, v_d)$,
$\diag[(v_i)_{i=1,\ldots,d}]$ denotes a diagonal matrix with $(v_i)_{i=1,\ldots,d}$ on the diagonal,
and $A \preceq B$ for two matrices $A$ and $B$ means $B-A$ is positive semidefinite.
Note $\nabla\Psi_t$ is a bijection from $(0,1)^d$ to $\mathbb{R}^{d}$.
Therefore $\nabla\Psi^*_t(L)\in (0,1)^d$ for any $L\in\mathbb{R}^d$,
and all $x_t$'s we consider here are in the domain $(0,1)^d$.

The following Lemma will be useful to show that the stability term can be bounded independently of the action set $\X$.
\begin{lemma}
\label{lem:divergence_ineq}
For any $L$, let $\tilde L = \nabla\Psi_t(\nabla\Phi_t(L))$.
Then it holds for any $\ell\in\mathbb{R}^d$:
\begin{align*}
D_{\Phi_t}(L+\ell, L) \leq D_{\Psi_t^*}(\tilde{L}+\ell,\tilde{L}).
\end{align*}
\end{lemma}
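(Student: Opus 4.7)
The plan is to exploit the relationship between the two conjugates $\Phi_t = (\Psi_t + \mathcal{I}_{\conv(\X)})^*$ and $\Psi_t^*$, and in particular to view $\tilde L$ as the ``unconstrained-dual image'' of the constrained optimizer. First I would set $x_t = \nabla\Phi_t(L)$, which by \eqref{eqn:legendre property 1} is the maximizer of $\langle x, L\rangle - \Psi_t(x)$ over $\conv(\X)$. By \eqref{eqn:legendre property 3} we have $\nabla\Psi_t^*(\tilde L) = \nabla\Psi_t^*(\nabla\Psi_t(x_t)) = x_t$, so $x_t$ is also the \emph{unconstrained} maximizer of $\langle x, \tilde L\rangle - \Psi_t(x)$ (over $[0,1]^d$). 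Therefore
\begin{equation*}
\Phi_t(L) = \langle x_t, L\rangle - \Psi_t(x_t), \qquad \Psi_t^*(\tilde L) = \langle x_t, \tilde L\rangle - \Psi_t(x_t),
\end{equation*}
which gives the key identity $\Phi_t(L) - \Psi_t^*(\tilde L) = \langle x_t, L - \tilde L\rangle$.

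Next I would write out both Bregman divergences. Since $\nabla\Phi_t(L) = \nabla\Psi_t^*(\tilde L) = x_t$, both linear correction terms equal $\langle x_t, \ell\rangle$, and the desired inequality reduces to
\begin{equation*}
\Phi_t(L+\ell) - \Phi_t(L) \;\leq\; \Psi_t^*(\tilde L+\ell) - \Psi_t^*(\tilde L).
\end{equation*}
Using the identity from the previous paragraph, this is in turn equivalent to showing $\Phi_t(L+\ell) \leq \Psi_t^*(\tilde L+\ell) + \langle x_t, L - \tilde L\rangle$.

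The heart of the argument is this last inequality, and it follows from a first-order optimality condition. Since $x_t$ maximizes $\langle x, L\rangle - \Psi_t(x)$ over the convex set $\conv(\X)$, for every $x \in \conv(\X)$ we have $\langle L - \nabla\Psi_t(x_t), x - x_t\rangle \leq 0$, i.e.\ $\langle \tilde L - L, x - x_t\rangle \geq 0$. Then
\begin{align*}
\Psi_t^*(\tilde L+\ell) + \langle x_t, L - \tilde L\rangle
&= \max_{x\in[0,1]^d} \bigl( \langle x, \tilde L+\ell\rangle - \Psi_t(x) \bigr) + \langle x_t, L - \tilde L\rangle \\
&\geq \max_{x\in\conv(\X)} \bigl( \langle x, L+\ell\rangle + \langle x - x_t, \tilde L - L\rangle - \Psi_t(x) \bigr) \\
&\geq \max_{x\in\conv(\X)} \bigl( \langle x, L+\ell\rangle - \Psi_t(x) \bigr) \;=\; \Phi_t(L+\ell),
\end{align*}
where the final inequality drops the nonnegative term. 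This closes the proof.

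I expect the main subtlety to be keeping the two conjugates straight and verifying that $\tilde L$ is precisely the point at which $\Psi_t^*$ ``mimics'' $\Phi_t$ both in value (up to the affine shift $\langle x_t, L-\tilde L\rangle$) and in gradient; once that is set up, the first-order optimality of $x_t$ over $\conv(\X)$ does all the work. No heavy computation is required, and the hybrid form of $\Psi$ plays no role in this lemma beyond its convexity and the fact that $\nabla\Psi_t$ is a bijection from $(0,1)^d$ onto $\mathbb{R}^d$.
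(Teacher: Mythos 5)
Your proof is correct and follows essentially the same route as the paper's: both hinge on the identity $\Psi_t^*(\tilde L) = \Phi_t(L) + \langle \nabla\Phi_t(L), \tilde L - L\rangle$ (your key identity, Eq.~\eqref{eq:eq2} in the paper), the fact that the two Bregman linear corrections agree because $\nabla\Psi_t^*(\tilde L) = \nabla\Phi_t(L)$, and the first-order optimality condition $\langle \tilde L - L, x - x_t\rangle \geq 0$ for $x \in \conv(\X)$. Your presentation is slightly cleaner in that you cancel the linear terms up front, reduce to a value inequality, and then bound the constrained maximum directly (dropping a nonnegative term uniformly over $x\in\conv(\X)$), whereas the paper lower-bounds $\Psi_t^*(\tilde L+\ell)$ at the single point $\nabla\Phi_t(L+\ell)$ and invokes first-order optimality at the end --- but these are two arrangements of the same argument.
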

\begin{proof}
First we state two equalities that follow from the previously stated properties. 
\begin{align}
    \nabla\Psi_t^*(\tilde{L}) &= \nabla\Psi_t^*(\nabla\Psi_t(\nabla\Phi_t(L))) \stackrel{\text{Eq.}\eqref{eqn:legendre property 3}}{=} \nabla\Phi_t(L),   \label{eq:eq1}\\
    \Psi_t^*(\tilde{L}) & \stackrel{\text{Eq.}\eqref{eqn:legendre property 2}}{=}  \scalar{\nabla\Psi_t^*(\tilde{L})}{\tilde{L}} - \Psi_t(\nabla\Psi_t^*(\tilde{L})) \nonumber \\
    &= \scalar{\nabla\Phi_t(L)}{\tilde{L}} - \Psi_t(\nabla\Phi_t(L)) \nonumber \\
    &\stackrel{\text{Eq.}\eqref{eqn:legendre property 1}}{=} \Phi_t(L)+\scalar{\nabla\Phi_t(L)}{\tilde{L}-L}. \label{eq:eq2}
\end{align}
We then proceed as follows:
\begin{align*}
    &D_{\Psi_t^*}(\tilde{L}+\ell,\tilde{L}) \\
    &= \Psi_t^*(\tilde L+\ell)-\Psi_t^*(\tilde{L})-\left\langle \nabla\Psi^*_t(\tilde L), \ell \right\rangle \tag{definition of Bregman divergence}\\ 
    &=\Psi_t^*(\tilde L+\ell)-\Phi_t(L)-\left\langle\nabla\Phi_t(L),\tilde L - L\right\rangle -\left\langle \nabla\Phi_t(L), \ell \right\rangle   \tag{by Eq.~~\eqref{eq:eq1} and \eqref{eq:eq2}}\\
    &=\Psi_t^*(\tilde L+\ell)-\Phi_t(L) -\left\langle \nabla\Phi_t(L), \tilde L - L+\ell \right\rangle  \\
    &\geq \scalar{ \nabla\Phi_t(L+\ell)}{ \tilde L +\ell} - \Psi_t(\nabla\Phi_t(L+\ell)) -\Phi_t(L) -\left\langle \nabla\Phi_t(L),\tilde L - L+ \ell \right\rangle  \tag{$\Psi_t^*$ is defined as the maximum} \\
    &= \scalar{ \nabla\Phi_t(L+\ell)}{ L +\ell}-\Psi_t(\nabla\Phi_t(L+\ell)) +\left\langle \nabla\Phi_t(L+\ell), \tilde L - L\right\rangle-\Phi_t(L)-\left\langle \nabla\Phi_t(L), \tilde L - L+\ell \right\rangle \\
    &= \Phi_t(L+\ell) +\left\langle \nabla\Phi_t(L+\ell), \tilde L - L\right\rangle-\Phi_t(L)-\left\langle \nabla\Phi_t(L), \tilde L - L+\ell \right\rangle \tag{by the definition of $\Phi_t$ and Eq.~\eqref{eqn:legendre property 1}}\\
    &= D_{\Phi_t}(L+\ell,L) +\left\langle \nabla\Phi_t(L+\ell)-\nabla\Phi_t(L),\tilde L - L\right\rangle\\
    &= D_{\Phi_t}(L+\ell,L) +\left\langle \nabla\Phi_t(L+\ell)-\nabla\Phi_t(L), \nabla\Psi_t(\nabla\Phi_t(L)) - L \right\rangle \\
    &\geq D_{\Phi_t}(L+\ell,L).
\end{align*}
The last step is by the first-order optimality condition: for the maximizer $\nabla\Phi_t(L):=\argmax_{x\in\conv(\X)}\scalar{x}{L}-\Psi_t(x)$ it must hold that 
$
\scalar{y - \nabla\Phi_t(L)}{ L - \nabla\Psi_t(\nabla\Phi_t(L))} \leq 0
$
for any $y\in\conv(\X)$.
\end{proof}

The next Lemma will be useful to bound the eigenvalues of the Hessian of $\Psi^*_t$.
\begin{lemma}
\label{lem:tilde_alpha_bound}
If $\eta_t \leq \min\{\frac{\sqrt{2}-1}{2},\frac{\gamma\log(2)}{4}\}$, then for any $x\in(0,1)^d$ and $\hat\ell$ such that $-1 \leq \hat{\ell}_i \leq \frac{2}{x_i}$ for all $i$, we have
\begin{align*}
2 x_{i}-1 \leq \nabla\Psi_t^*(\nabla\Psi_t(x)-\hat\ell)_i \leq 2 x_{i}.
\end{align*}
\end{lemma}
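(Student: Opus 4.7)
The plan is to exploit the coordinate-wise separability of the regularizer and reduce everything to a scalar monotonicity argument. By property~\eqref{eqn:legendre property 3}, setting $y := \nabla\Psi_t^*(\nabla\Psi_t(x) - \hat\ell) \in (0,1)^d$, the vector $y$ is characterized coordinate-wise by $\nabla\Psi_t(y)_i = \nabla\Psi_t(x)_i - \hat\ell_i$. Defining the scalar function $h(z) := -\frac{1}{2\sqrt{z}} - \gamma\log(1-z)$ on $(0,1)$, so that $\nabla\Psi_t(x)_i = \eta_t^{-1}(h(x_i) - \gamma)$, this equation collapses to $h(y_i) = h(x_i) - \eta_t\hat\ell_i$. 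Since $h'(z) = \frac{1}{4 z^{3/2}} + \frac{\gamma}{1-z} > 0$, the map $h$ is strictly increasing, so $y_i$ is strictly decreasing in $\hat\ell_i$. Over the admissible range $\hat\ell_i \in [-1, 2/x_i]$, the extreme values of $y_i$ are thus attained at the endpoints; it suffices to verify the upper bound at $\hat\ell_i = -1$ and the lower bound at $\hat\ell_i = 2/x_i$.

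For the upper bound $y_i \leq 2 x_i$, I would split on the value of $x_i$. When $x_i \geq 1/2$, we have $2 x_i \geq 1 > y_i$ automatically. When $x_i < 1/2$, by monotonicity of $h$ and $h(y_i) = h(x_i) + \eta_t$, it suffices to show $h(2 x_i) - h(x_i) \geq \eta_t$. A direct expansion gives
\begin{align*}
h(2 x_i) - h(x_i) \;=\; \frac{\sqrt{2}-1}{2\sqrt{2 x_i}} + \gamma \log\frac{1- x_i}{1-2 x_i},
\end{align*}
whose logarithmic term is nonnegative on $(0, 1/2)$ and whose first term is at least $(\sqrt{2}-1)/2$ because $2 x_i \leq 1$. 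The hypothesis $\eta_t \leq (\sqrt{2}-1)/2$ then closes this case.

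The lower bound $y_i \geq 2 x_i - 1$ is handled symmetrically. For $x_i \leq 1/2$ we have $2 x_i - 1 \leq 0 < y_i$ trivially, so only $x_i > 1/2$ with $\hat\ell_i = 2/x_i$ is nontrivial. There, $h(y_i) \geq h(2 x_i - 1)$ reduces to $h(x_i) - h(2 x_i - 1) \geq 2\eta_t/x_i$, and a direct computation yields
\begin{align*}
h(x_i) - h(2 x_i - 1) \;=\; \frac{1}{2\sqrt{2 x_i - 1}} - \frac{1}{2\sqrt{x_i}} + \gamma\log 2 \;\geq\; \gamma\log 2,
\end{align*}
since $2 x_i - 1 \leq x_i$ implies the first two terms are nonnegative. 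Combining $x_i > 1/2$ with $\eta_t \leq \gamma\log(2)/4$ gives $2\eta_t / x_i \leq \gamma \log 2$, which closes this case.

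The main obstacle is recognizing up front that the problem decomposes over coordinates and reduces to a monotonicity statement for the scalar function $h$; once that observation is made, each direction becomes a one-line inequality. A subtle point is that the two halves of the conclusion use different endpoints of the admissible range of $\hat\ell_i$, and the asymmetry of that range (the $2/x_i$ versus $-1$) is exactly what forces the two distinct terms inside $\min\{(\sqrt{2}-1)/2,\gamma\log(2)/4\}$ in the hypothesis, each calibrated so that the corresponding calculation goes through at the worst-case value of $x_i$ on its own side of $1/2$.
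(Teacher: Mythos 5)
Your proof is correct and, at heart, performs the same computation as the paper's. The paper argues by contradiction (assume $y_i>2x_i$, derive $\hat\ell_i<-\eta_t^{-1}(\sqrt{2}-1)/2$, contradiction), while you argue directly via the strict monotonicity of $h=\eta_t\nabla\Psi_t(\cdot)+\gamma$, reducing to the endpoint of the admissible $\hat\ell_i$ range and checking $h(2x_i)-h(x_i)\geq\eta_t$ and $h(x_i)-h(2x_i-1)\geq 2\eta_t/x_i$; these are the identical inequalities the paper's contradiction implicitly verifies, just phrased affirmatively. Your version is arguably cleaner and also implicitly corrects a small slip in the paper's lower-bound chain, where the paper writes $\eta_t^{-1}\log(2)$ as the intermediate lower bound but actually only establishes $\eta_t^{-1}\gamma\log(2)$ (which is still sufficient, as your calculation makes explicit).
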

\begin{proof}
The functions $\nabla\Psi_t$ and $\nabla\Psi^*_t$ are symmetric and independent in each dimension. Therefore it is sufficient to consider $d=1$ and drop the index $i$.

For the upper bound we can assume $ x<\frac{1}{2}$; otherwise the statement is trivial since the range of $\nabla\Psi_t^*$ is $(0,1)^d$.
Now assume the opposite holds: $\nabla\Psi_t^*(\nabla\Psi_t(x)-\hat\ell) > 2 x$,
then we have
\begin{align*}
\hat\ell &= \nabla\Psi_t( x)-\nabla\Psi_t(x)+\hat\ell = \nabla\Psi_t( x)-\nabla\Psi_t(\nabla\Psi_t^*(\nabla\Psi_t(x)-\hat\ell)) \\ 
& < \nabla\Psi_t( x)-\nabla\Psi_t(2 x)   \tag{$\nabla\Psi_t(x)$ is strictly increasing in $(0,1)$} \\
&= \eta_t^{-1}\left(-\frac{1}{2\sqrt{ x}}-\gamma\log(1- x)+\frac{1}{2\sqrt{2 x}}+\gamma\log(1-2 x)\right)\\
&< -\eta_t^{-1}\left(\frac{\sqrt{2}-1}{2\sqrt{2}}\right)\frac{1}{\sqrt{ x}} \\
& < -\eta_t^{-1}\left(\frac{\sqrt{2}-1}{2}\right) \tag{ $x\leq \frac{1}{2}$} .
\end{align*}
The last line is a contradiction to the conditions $\eta_t\leq \frac{\sqrt{2}-1}{2}$ and $\hat\ell \geq -1$.

For the lower bound we can assume $ x > \frac{1}{2}$, otherwise the statement is again trivial.
Assume the opposite holds: $\nabla\Psi_t^*(\nabla\Psi_t(x)+\hat\ell) < 2 x -1$,
then we have
\begin{align*}
\hat\ell &= \nabla\Psi_t( x)-\nabla\Psi_t(\nabla\Psi_t^*(\nabla\Psi_t(x)-\hat\ell)) \\
&> \nabla\Psi_t( x)-\nabla\Psi_t(2 x-1) \tag{$\nabla\Psi_t(x)$ is strictly increasing in $(0,1)$} \\
&= \eta_t^{-1}\left(-\frac{1}{2\sqrt{ x}}-\gamma\log(1- x)+\frac{1}{2\sqrt{2 x-1}}+\gamma\log(2-2 x)\right)\\
&> \eta_t^{-1}\log(2) > \eta_t^{-1}\frac{\gamma\log(2)}{4}\frac{2}{ x} \tag{$\gamma \leq 1$ and $x > 1/2$}.
\end{align*}
which again leads to a contradiction to the conditions $\eta_t \leq\frac{\gamma\log(2)}{4}$ and $\hat\ell \leq \frac{2}{x}$.
This finishes the proof.
\end{proof}

Finally we are ready to prove Lemma~\ref{lem:stab}.
\begin{proof}[Proof of Lemma~\ref{lem:stab}]
Let $\tilde x_t = \nabla\Psi^*(\nabla\Psi( x_t)-\hat\ell_t)$. Define $\mathcal A_t= \bigotimes_{i=1}^d[ x_{ti}, \tilde x_{ti}]$. 
For any $t_0\geq 0$, we bound the stability term by
\begin{align}
    \Regstab &= \E\left[\sum_{t=1}^T\scalar{X_t}{\ell_{t}} + \Phi_t(-\hat L_t) - \Phi_t(-\hat L_{t-1})\right]\nonumber\\
&\stackrel{(1)}{\leq} \E\left[\sum_{t=t_0}^T\scalar{X_t}{\ell_{t}} + \Phi_t(-\hat L_t) - \Phi_t(-\hat L_{t-1})\right]+2t_0 m \nonumber\\
&\stackrel{(2)}{=} \E\left[\sum_{t=t_0}^T \E_t\left[\scalar{ x_t}{\hat{\ell}_t } + \Phi_t(-\hat L_t) - \Phi_t(-\hat L_{t-1})\right]\right]+2t_0 m \nonumber\\
&= \E\left[\sum_{t=t_0}^T \E_t\left[D_{\Phi_t}(-\hat L_t, -\hat L_{t-1})\right]\right]+t_0 m \nonumber\\
&\stackrel{(3)}{\leq}\E\left[\sum_{t=t_0}^T \E_t\left[D_{\Psi_t^*}(\nabla\Psi_t( x_t)-\hat\ell_t, \nabla\Psi_t( x_t))\right]\right]+2t_0 m \nonumber\\
&=\E\left[\sum_{t=t_0}^T \E_t\left[D_{\Psi_t^*}(\nabla\Psi_t( \tilde{x}_t), \nabla\Psi_t( x_t))\right]\right]+2t_0 m \nonumber\\
&\stackrel{(4)}{=} \E\left[\sum_{t=t_0}^T \E_t\left[\frac{1}{2}||\hat\ell_t||^2_{\nabla^2\Psi_t^*(z_t)}\right]\right]+2t_0 m \nonumber\\
&\stackrel{(5)}{\leq}\E\left[\sum_{t=t_0}^T \E_t\left[\max_{ x\in\mathcal A_t}\frac{1}{2}||\hat\ell_t||^2_{\nabla^2\Psi_t( x)^{-1}}\right]\right]+2t_0 m \nonumber\\
&=\E\left[\sum_{t=t_0}^T \E_t\left[\max_{ x\in\mathcal A_t}\frac{\eta_t}{2}||\hat\ell_t||^2_{\nabla^2\Psi( x)^{-1}}\right]\right]+2t_0 m. \label{eq:stab_as_norm}
\end{align}
(1) The difference of potentials for each step is bounded by $\Phi_t(-\hat L_t) - \Phi_t(-\hat L_{t-1})\leq \langle \nabla\Phi_{t}(-\hat L_{t}), -\hat\ell\rangle\leq ||\nabla\Phi_{t}(-\hat L_{t})||_1\leq m$, and the loss $\scalar{X_t}{\ell_t}$ is bounded by $m = \max_{x\in \X}||x||_1$.
(2) By the tower rule of conditional expectation, the unbiaseness of $\hat\ell$ and the sampling assumption, it holds that
\begin{align*}
\E\left[\scalar{X_t}{\ell_{t}}\right]=\E\left[\E_t\left[\scalar{X_t}{\ell_{t}}\right]\right] = \E\left[\E_t\left[\scalar{ x_t}{\ell_{t}}\right]\right] =\E\left[ \E_t\left[\scalar{ x_t}{\hat\ell_{t}}\right]\right].
\end{align*}

(3) Applyication of Lemma~\ref{lem:divergence_ineq}. 

(4) Property~\ref{eq:breg-2nd} ensures that some $z_t\in\conv(\{\nabla\Psi_t(x),\nabla\Psi_t(\tilde{x})\})$ exists that satisfies the equality. 

(5) By property~\eqref{eqn:legendre property 4} and the coordinate-wise monotonicity of $\nabla\Psi^*_t$ so that $\nabla\Psi^*_t\left(z_t \right) \subset \bigotimes_{i=1}^d[x_{ti},\tilde{x}_{ti}] = \mathcal A_t$.

We choose $t_0 = 58\gamma^{-2}$ such that $\eta_{t} \leq \min\{\frac{\sqrt{2}-1}{2},\frac{\gamma\log(2)}{4}\}$ for any $t \geq t_0$.
By the construction of $\hat{\ell}_t$ we clearly have $-1 \leq \hat{\ell}_{ti} \leq \frac{2}{x_{ti}}$.
We can then apply Lemma~\ref{lem:tilde_alpha_bound} 
to conclude that $\tilde{x}_{ti} \in [2x_{ti}-1, 2x_{ti}]$.
Therefore, with the form of Hessian \eqref{eq:inv-hessian} we have:
\begin{align*}
\forall x\in\mathcal A_t:\qquad  \nabla^2\Psi( x)^{-1} \leq \diag\left[\left(\min\left\{4\sqrt{(2x_{ti})^3},2\gamma^{-1}(1-x_{ti})\right\}\right)_{i=1,\dots,d}\right],
\end{align*}
and therefore,
\begin{align}
\sum_{t=t_0}^T \E_t\left[\max_{ x\in\mathcal A_t}\frac{\eta_t}{2}||\hat\ell_t||^2_{\nabla^2\Psi( x)^{-1}}\right] &\leq \sum_{t=t_0}^T \E_t\left[\frac{\eta_t}{2}\sum_{i=1}^d(\hat\ell_{ti})^2\min\{4\sqrt{(2x_{ti})^3},2\gamma^{-1}(1-x_{ti})\}\right]\nonumber   \\
&\stackrel{(1)}{\leq}\sum_{t=t_0}^T\frac{\eta_t}{2}\sum_{i=1}^d\frac{4}{x_{ti}}\min\{4\sqrt{(2x_{ti})^3},2\gamma^{-1}(1-x_{ti})\}\nonumber\\
&\stackrel{(2)}{\leq}\sum_{t=t_0}^T16\sqrt{2}\eta_t\sum_{i=1}^d\min\{\sqrt{x_{ti}},\gamma^{-1}(1-x_{ti})\}\nonumber\\
&\leq \sum_{t=1}^T\frac{16\sqrt{2}}{\sqrt{t}}\left(\sum_{i:x^*_i=0}\sqrt{x_{ti}}+\sum_{i:x^*_i=1}\gamma^{-1}(1-x_{ti})\right).\label{eq:norm_bound}
\end{align}
(1) Conditioned on $\mathcal{F}_{t-1}$, only $(\hat\ell_{ti})^2$ is random and its expectation is
\begin{align*}
\E_t\left[(\hat\ell_{ti})^2\right] = x_{ti}\left(\frac{\ell_{ti}+1}{x_{ti}}-1\right)^2 + 1-x_{ti} = \frac{4}{x_{ti}}\frac{(\ell_{ti}+1)^2-2(\ell_{ti}+1)x_{ti}+x_{ti}}{4}\leq 
 \frac{4}{x_{ti}}\frac{(4-4x_{ti}+x_{ti})}{4} \leq\frac{4}{x_{ti}}. 
\end{align*}

(2) Note that it always holds
\[
\frac{4}{x_{ti}}\min\left\{4\sqrt{(2x_{ti})^3},2\gamma^{-1}(1-x_{ti})\right\}
\leq \frac{16}{x_{ti}} \sqrt{(2x_{ti})^3} = 32\sqrt{2x_{ti}}.
\]
So it suffices to prove $\frac{4}{x_{ti}}\min\{4\sqrt{(2x_{ti})^3},2\gamma^{-1}(1-x_{ti})\} \leq 32\sqrt{2}\gamma^{-1}(1-x_{ti})$.
We consider two cases: 
(A) If $4\sqrt{(2x_{ti})^3} \leq 2\gamma^{-1}(1-x_{ti})$,
then we need to prove $\sqrt{x_{ti}} \leq \gamma^{-1}(1-x_{ti})$.
This is true since either $x_{ti} \geq 1/\sqrt{32}$ and thus $\sqrt{x_{ti}} \leq
2\sqrt{(2x_{ti})^3}  \leq \gamma^{-1}(1-x_{ti})$,
or $x_{ti} < 1/\sqrt{32}$ in which case $\sqrt{x_{ti}} \leq 1- x_{ti} \leq \gamma^{-1}(1-x_{ti})$.
(B) If $4\sqrt{(2x_{ti})^3} \geq 2\gamma^{-1}(1-x_{ti})$,
then $x_{ti}$ must be larger than $1/4$. 
In this case we bound $\frac{1}{x_{ti}}$ by 4 and the desired inequality follows.

The proof is concluded by inserting Eq.~\eqref{eq:norm_bound} into Eq.~\eqref{eq:stab_as_norm} and using Jensen's inequality to move the expectation into the concave functions.
\end{proof}

\subsection{General upper bounds for $C_{sto}, C_{add}$ and $C_{adv}$}

We now finish the proof of Theorem~\ref{th:arbitrary} on the upper bounds of the three constants.
\ \\

\textbf{Bounding $\mathbf{C_{adv}}$:}
\begin{align*}
    C_{adv} 
   &= \max_{x\in\conv(\X)} \sum_{i: x_i^*=0}\sqrt{x_i} +  \sum_{i: x_i^*=1} (\gamma^{-1}-\gamma\log(1-x_i))(1-x_i) \\
   &\leq \max_{x\in\conv(\X)}  \sum_{i: x_i^*=0}\sqrt{x_i} +  \sum_{i: x_i^*=1} \gamma\sqrt{1-x_i} + \sum_{i: x_i^*=1} \gamma^{-1}(1-x_i)  \tag{$-y\log y \leq \sqrt{y}$ for $y\in[0,1]$}  \\
   &\leq  \max_{x\in\conv(\X)} \sqrt{\left(\sum_{i: x_i^*=0}1\right)\left(\sum_{i: x_i^*=0} x_i\right)} +\gamma \sqrt{\left(\sum_{i: x_i^*=1}1\right)\left(\sum_{i: x_i^*=1}(1-x_i)\right)} + \gamma^{-1}m   \tag{Cauchy-Schwarz} \\
   &\leq \sqrt{dm} + \gamma m + \gamma^{-1}m \\
   &\leq \mathcal{O}\left( \gamma^{-1}\sqrt{md} \right). 
\end{align*}

\textbf{Bounding $\mathbf{C_{sto}}$:}  $C_{sto}$ is defined as $\max_{\alpha\in [0,\infty)^{|\X|}} f(\overline{\alpha}) - r(\alpha)$. First we bound $f(\overline{\alpha})$: 
\begin{align*}
     f(\overline{\alpha}) = \sum_{i: x_i^*=0} \sqrt{\overline{\alpha}_i} = \sum_{i: x_i^*=0} \sqrt{\sum_{x\in\X}\alpha_x x_i} = \sum_{i: x_i^*=0} \sqrt{\sum_{x\in\X\backslash \{x^*\}}\alpha_x x_i} \leq \sqrt{d\sum_{i:x_i^*=0}\sum_{x\in\X\backslash \{x^*\}}\alpha_x x_i}
\leq \sqrt{dm\sum_{x\in\X\backslash \{x^*\}}\alpha_x}. 
\end{align*}
On the other hand, 
\begin{align*}
     r(\alpha) = \sum_{x\in \X\backslash \{x^*\}} \alpha_x \Delta_{x} \geq \Delta_{\min} \sum_{x\in \X\backslash \{x^*\}} \alpha_x. 
\end{align*}
Combining them we get 
\begin{align*}
    C_{sto}&\leq \max_{\alpha\in[0,\infty)} \sqrt{dm \sum_{x\in\X\backslash\{x^*\}}\alpha_x}- \Delta_{\min}\sum_{x\in\X\backslash\{x^*\}}\alpha_x \\
                &\leq \max_{A\geq 0} \sqrt{dm A} - \Delta_{\min}A \\
                &\leq \max_{A\geq 0}  \Delta_{\min}A + \frac{dm}{4\Delta_{\min}}  - \Delta_{\min}A \tag{AM-GM inequality} \\
                &=  \frac{dm}{4\Delta_{\min}}. 
\end{align*}

\textbf{Bounding $\mathbf{C_{add}}$:} 
Recall $C_{add}$ is defined as $\sum_{t=1}^{\infty} \max_{\alpha \in \Delta(\X)} \left( \frac{100}{\sqrt{t}}g(\overline \alpha)-r(\alpha) \right)$.
We will give a upper bound for $g(\overline \alpha)$ and lower bound for $r(\alpha)$ below.

We first prove the following property: for any $y\in \mathbb{R}_+^N$, $\sum_{i=1}^N y_i\log\frac{1}{y_i} \leq \|y\|_1 \log\frac{N}{\|y\|_1}$. Indeed, by the concavity of the $\log$ function and Jensen's inequality, 
\begin{align*}
     \sum_{i=1}^N \frac{y_i}{\|y\|_1}\log\frac{1}{y_i} \leq \log\left( \sum_{i=1}^N \frac{y_i}{\|y\|_1} \frac{1}{y_i} \right) = \log \frac{N}{\|y\|_1}. 
\end{align*}

Therefore, for any $\alpha\in \Delta(\mathcal{X})$ we have
\begin{align*}
     g(\overline{\alpha}) &= \sum_{i: x_i^*=1}\left(\gamma^{-1} + \gamma\log\left(\frac{1}{1-\bar{\alpha}_i}\right)\right)(1-\bar{\alpha}_i) \\
     &\leq \left(\sum_{i: x_i^*=1}(1-\overline{\alpha}_i) \right)\left(\gamma^{-1} + \gamma \log \frac{m}{\sum_{i: x_i^*=1} (1-\overline{\alpha}_i)}\right)  \tag{using the above property}.\\
\end{align*}
Then consider the following two facts. First, the function of $y$ defined by $y(\gamma^{-1} + \gamma \log \frac{m}{y} )$ is increasing in $y\in [0, m]$. This can be verified by
\begin{align*}
\frac{\partial}{\partial y}\left(y\left(\gamma^{-1} + \gamma \log \frac{m}{y} \right)\right) = \gamma^{-1}+\gamma \log m - \gamma\log y - \gamma \geq 0. \tag{$\gamma\leq 1$}
\end{align*}
Second, we have $\sum_{i:x_i^*=1}(1-\overline{\alpha}_i) = \sum_{i:x_i^*=1}\sum_{\alpha\in \X} \alpha_x(1-x_i) = \sum_{i:x_i^*=1}\sum_{\alpha\in \X\backslash\{x^*\} }\alpha_x(1-x_i) \leq \|x^*\|_1 \left(\sum_{\alpha\in \X\backslash\{x^*\} }\alpha_x\right) \leq m \left(\sum_{\alpha\in \X\backslash\{x^*\} }\alpha_x\right)$. Combining these two facts with the above bound for $g(\overline{\alpha})$, we get 
\begin{align*}
    g(\overline{\alpha})  \leq m \left(\sum_{\alpha\in \X\backslash\{x^*\} }\alpha_x\right)\left(\gamma^{-1} + \gamma \log \frac{1}{\sum_{\alpha\in\X\backslash \{x^*\}}\alpha_x}\right).   
\end{align*}
On the other hand, we have the lower bound for $r(\alpha)$: 
\begin{align*}
    r(\alpha) = \sum_{x\in \mathcal{X}\backslash \{x^*\}} \alpha_x \Delta_x \geq  \Delta_{\min} \sum_{x\in \mathcal{X}\backslash \{x^*\}} \alpha_x.   
\end{align*}
Therefore, 
\begin{align*}
C_{add}&= \sum_{t=1}^{\infty} \max_{\alpha\in \Delta(|\X|)} \left( \frac{100}{\sqrt{t}}g(\overline \alpha) - r(\alpha) \right) \\
&\leq \sum_{t=1}^{\infty} \max_{A\in [0,1]} \left(\frac{100}{\sqrt{t}} mA \left(\gamma^{-1}+\gamma\log \frac{1}{A}\right) - \Delta_{\min} A\right). 
\end{align*}
We further bound it by the sum of the following two summations: 
\begin{itemize}
\item $\displaystyle \sum_{t=1}^{\infty} \max_{A\in [0,1]} \left(\frac{100}{\sqrt{t}} mA \gamma^{-1}  - \frac{1}{2}\Delta_{\min} A\right)$
\item $\displaystyle \sum_{t=1}^{\infty} \max_{A\in [0,1]} \left( \frac{100}{\sqrt{t}}m A\gamma \log \frac{1}{A} - \frac{1}{2} \Delta_{\min} A\right)$
\end{itemize} 
Lemma~\ref{lemma: another tedius lemma} and~\ref{lemma: tedius lemma} below respectively bound these two as $\mathcal{O}\left( \frac{m^2\gamma^{-2}}{\Delta_{\min}} \right)$ and $\mathcal{O}\left( \frac{m^2 \gamma^2}{\Delta_{\min}} \right)$, which finishes the proof.

\begin{lemma}
\label{lemma: another tedius lemma}
For any $C > 0$ and $\Delta > 0$, we have
$\sum_{t=1}^{\infty}\max_{A\in [0,1]} \left( \frac{C}{\sqrt{t}}A - \Delta A \right) \leq \mathcal{O}\left(\frac{C^2}{\Delta}\right)$. 
\end{lemma}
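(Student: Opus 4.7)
The plan is to handle this via direct computation. The inner objective $A \mapsto A\bigl(\frac{C}{\sqrt{t}} - \Delta\bigr)$ is linear in $A$, so the maximum over $A \in [0,1]$ is attained at an endpoint. First I would observe that when $\frac{C}{\sqrt{t}} \leq \Delta$, i.e.\ when $t \geq C^2/\Delta^2$, the coefficient is nonpositive, so the maximum is $0$ (attained at $A = 0$) and these terms contribute nothing. Only the range $t < C^2/\Delta^2$ matters, and in that range the maximum is attained at $A = 1$ with value $\frac{C}{\sqrt{t}} - \Delta \leq \frac{C}{\sqrt{t}}$.

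Next I would bound the truncated sum. Let $T_0 = \lfloor C^2/\Delta^2 \rfloor$. Using the standard integral bound
\begin{equation*}
\sum_{t=1}^{T_0} \frac{1}{\sqrt{t}} \;\leq\; 1 + \int_1^{T_0} \frac{dt}{\sqrt{t}} \;\leq\; 2\sqrt{T_0} \;\leq\; \frac{2C}{\Delta},
\end{equation*}
one immediately obtains
\begin{equation*}
\sum_{t=1}^{\infty} \max_{A\in[0,1]} \left( \frac{C}{\sqrt{t}} A - \Delta A \right) \;\leq\; \sum_{t=1}^{T_0} \frac{C}{\sqrt{t}} \;\leq\; \frac{2C^2}{\Delta} \;=\; \mathcal{O}\!\left(\frac{C^2}{\Delta}\right),
\end{equation*}
which is exactly the claimed bound.

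There is no substantive obstacle here: the linearity in $A$ reduces the problem to a one-dimensional truncation, and the remaining step is the textbook $\sum_{t \leq T} t^{-1/2} = O(\sqrt{T})$ estimate. The only minor care needed is to drop the $-\Delta$ term (which is harmless since it only decreases the sum) and to note that the tail $t \geq C^2/\Delta^2$ contributes zero rather than being absorbed into a constant. No additional tools beyond elementary calculus are required.
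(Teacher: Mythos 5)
Your argument is correct and follows the paper's proof essentially verbatim: both drop the harmless $-\Delta A$ term, truncate the sum at $T_0 \approx C^2/\Delta^2$ (where the linear objective becomes nonpositive and the max is $0$), and apply the standard bound $\sum_{t\leq T_0} t^{-1/2} \leq 2\sqrt{T_0}$. No meaningful difference in approach.
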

\begin{proof}
Let $T_0$ be the largest $t$ such that $\frac{C}{\sqrt{t}}-\Delta >0$, then 
\begin{align*}
\sum_{t=1}^{\infty}\max_{A\in [0,1]} \left( \frac{C}{\sqrt{t}}A - \Delta A \right) \leq  \sum_{t=1}^{T_0} \frac{C}{\sqrt{t}} \leq 2C\sqrt{T_0} = \mathcal{O}\left(\frac{C^2}{\Delta}\right). 
\end{align*}
\end{proof}

\begin{lemma}
\label{lemma: tedius lemma}
For any $C > 0$ and $\Delta > 0$, we have
$\sum_{t=1}^{\infty}\max_{A\in [0,1]} \left( \frac{C}{\sqrt{t}}A \log \frac{1}{A} - \Delta A \right) \leq \frac{C^2}{\Delta}$. 
\end{lemma}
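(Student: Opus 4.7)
The plan is to (i) solve the inner maximization in closed form, (ii) bound the resulting series by an integral via a monotonicity argument, and (iii) evaluate that integral with a change of variables. There is essentially no obstacle; the only nontrivial step is a short monotonicity check, and the final inequality has comfortable slack.

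For (i), fix $t$ and let $h(A) = \frac{C}{\sqrt{t}} A\log(1/A) - \Delta A$. Differentiating gives $h'(A) = -\frac{C}{\sqrt{t}}(1+\log A) - \Delta$, which has a unique zero at $A^{\star} = \exp(-1 - \Delta\sqrt{t}/C) \in (0, 1/e]$. Since $A^\star$ lies strictly inside $[0,1]$, no boundary analysis is needed. Substituting back and using the optimality identity $\log(1/A^\star) = 1 + \Delta\sqrt{t}/C$ makes the two $\Delta A^\star$ contributions cancel, leaving $\max_{A\in[0,1]} h(A) = \tfrac{C}{\sqrt{t}} A^\star = \tfrac{C}{e\sqrt{t}}\exp(-\Delta\sqrt{t}/C)$.

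For (ii) and (iii), define $f(t) = t^{-1/2}\exp(-\Delta\sqrt{t}/C)$. A direct derivative computation gives $f'(t) = -\tfrac{1}{2} t^{-3/2}\exp(-\Delta\sqrt{t}/C)\bigl(1 + \Delta\sqrt{t}/C\bigr) < 0$ on $(0,\infty)$, so $f$ is strictly decreasing. Since $f$ is also integrable near $0$ (the $t^{-1/2}$ singularity is integrable), the standard comparison $\sum_{t=1}^{\infty} f(t) \leq \int_{0}^{\infty} f(t)\,dt$ applies. Then the substitution $u = \Delta\sqrt{t}/C$, $dt = (2C^2/\Delta^2)\,u\,du$ reduces the integral to $\tfrac{2C}{\Delta}\int_0^\infty e^{-u}\,du = \tfrac{2C}{\Delta}$. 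Combining with the prefactor $C/e$ yields $\sum_{t=1}^\infty \tfrac{C}{e\sqrt{t}}\exp(-\Delta\sqrt{t}/C) \leq \tfrac{2C^2}{e\Delta} \leq \tfrac{C^2}{\Delta}$, where the last inequality uses $2/e < 1$. This exactly matches the stated bound.
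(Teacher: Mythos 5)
Your proof is correct and follows essentially the same route as the paper: solve the inner maximization by calculus to obtain $A^\star = \exp(-1-\Delta\sqrt{t}/C)$ and the value $\frac{C}{e\sqrt{t}}\exp(-\Delta\sqrt{t}/C)$, bound the series by $\int_0^\infty$, and evaluate the integral via a substitution of $\sqrt{t}$. The only difference is that you spell out the monotonicity check justifying the integral comparison, which the paper leaves implicit.
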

\begin{proof}
We first solve the inner optimization with respect to a specific $t$. Taking the derivative with respect to $A$, and setting it to zero: 
\begin{align}
 \frac{C}{\sqrt{t}}\log \frac{1}{A^*}- \frac{C}{\sqrt{t}} - \Delta=0,  \label{eqn: intermediate}
\end{align}
we get the solution
\begin{align*}
A^* = \exp\left( -1-\frac{\sqrt{t}\Delta}{C} \right). 
\end{align*}
And thus, 
\begin{align*}
\max_{A\in [0,1]} \left(\frac{C}{\sqrt{t}}A\log \frac{1}{A} - \Delta A \right)
&= \frac{C}{\sqrt{t}}A^*\log \frac{1}{A^*} - \Delta A^*  
\stackrel{\text{Eq.}~\eqref{eqn: intermediate}}{=}A^*\left(\frac{C}{\sqrt{t}}+\Delta \right) - \Delta A^* \\
&=\frac{C}{\sqrt{t}}  \exp\left( -1-\frac{\sqrt{t}\Delta}{C} \right)
\end{align*}

Finally we have
\begin{align*}
\sum_{t=1}^\infty \max_{A\in [0,1]} \left(\frac{C}{\sqrt{t}}A\log \frac{1}{A} - \Delta A \right) 
&\leq \sum_{t=1}^{\infty} \frac{C}{\sqrt{t}} \exp\left( -1-\frac{\sqrt{t}\Delta}{C} \right)  
\leq  \int_{t=0}^\infty \frac{C}{\sqrt{t}}\exp\left( -1-\frac{\sqrt{t}\Delta}{C} \right) dt \\
&=   \frac{C^2}{\Delta}\int_{\tau=0}^\infty \frac{1}{\sqrt{\tau}} \exp(-1-\sqrt{\tau}) d\tau 
\leq \frac{C^2}{\Delta}. 
\end{align*}
\end{proof}


\section{Omitted Details for Sections~\ref{sec:hypercube} and~\ref{sec:m-set}}
\label{app:theorems}

In this section we provide omitted details for the two special cases:
full combinatorial set and $m$-set.

\subsection{Optimality of the stochastic bound when $\X=\{0,1\}^d$}
As mentioned in the proof of Theorem~\ref{th:full},
we provide here for completeness a proof showing that when $d=1$ and $\Delta > 0$,
the regret is at least $\Omega(\frac{\log T}{\Delta})$.

Assume that there exists an algorithm that is at least as good as ours asymptotically, which implies $\lim_{T\rightarrow\infty} \frac{\log(\Reg_T)}{\log(T)} \leq \lim_{T\rightarrow\infty} \frac{\log(\mathcal{O}\left(\log(T)\right)}{\log(T)} =0 $ for any problem.
For some $\Delta>0$ we consider two problems: $\E[\ell] = \Delta$ and $\E[\ell] = -\Delta$.
For simplicity we assume that the losses are drawn from i.i.d. Gaussian with variance $\sigma^2=1$, but the proof can be easily transferred to Bernoulli noise as well.
For the problem with positive loss, we denote the regret as $\Reg^+_T$ and the probability space induced by an algorithm by $\mathbb{P}^+$.
Equivalently we define $\Reg^-_T$ and $\mathbb{P}^-$.
The relative entropy between $\mathbb{P}^+$ and $\mathbb{P}^-$ is 
\begin{align*}
\text{\rm KL}(\mathbb{P}^+,\mathbb{P}^-) = \sum_{t=1}^T \mathbb{P}^+(X_t=1)(2\Delta)^2 = 4\Reg^+_T\Delta. 
\end{align*}
Also we have by the definition of regret:
\begin{align*}
\mathbb{P}^+\left(\sum_{t=1}^TX_t \geq \frac{T}{2}\right) + \mathbb{P}^-\left(\sum_{t=1}^TX_t < \frac{T}{2}\right) \leq \frac{2(\Reg_T^++\Reg_T^-)}{\Delta T}.
\end{align*}
Using the high probability Pinsker inequality (included after the proof for completeness), we get
\[
\frac{2(\Reg_T^++\Reg_T^-)}{\Delta T} \geq \frac{1}{2}\exp\left(-4\Reg^+_T\Delta\right).
\]
Rearranging gives
\[
\frac{\Reg^+_T}{\log(T)} 
= \frac{1}{4\Delta} - \frac{1}{4\Delta}\frac{\log(4(\Reg_T^++\Reg_T^-))}{\log(T)} + \frac{\log (\Delta)}{4\Delta\log(T)}. 
\]
Taking the limit on both sides shows $\lim_{T\rightarrow\infty} \frac{\Reg^+_T}{\log(T)} = \Omega(\frac{1}{\Delta})$,
which finishes the proof. \\

\begin{lemma}[High Probability Pinsker, e.g.~\citep{bubeck2013bounded}]
Let $\mathbb{P}$ and $\mathbb{Q}$ be probability measures on the same measurable
space $(\Omega, F)$ and let $A \in F$ be an arbitrary event. Then,
$$\mathbb{P}(A) + \mathbb{Q}(A^c) \geq \frac{1}{2}\exp(-\text{\rm KL}(\mathbb{P},\mathbb{Q})),$$
where $A^c$ is the complement of $A$ and $\text{\rm KL}(\mathbb{P},\mathbb{Q})$ the relative entropy.
\end{lemma}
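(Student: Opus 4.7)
The plan is to reduce the inequality to a single scalar inequality involving the Bhattacharyya coefficient $BC(\mathbb{P},\mathbb{Q}) := \int \sqrt{d\mathbb{P}\, d\mathbb{Q}}$, and then control $BC$ by $\text{\rm KL}$ via Jensen. Throughout, I will work with densities $p, q$ of $\mathbb{P}, \mathbb{Q}$ with respect to any common dominating measure (e.g.\ $\mathbb{P}+\mathbb{Q}$), which exists since both are probability measures on the same space.

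First I would strip the event $A$ from the picture with the pointwise bound $p\,\mathbb{1}_A + q\,\mathbb{1}_{A^c} \geq \min(p,q)$, which after integration gives the event-free lower bound $\mathbb{P}(A) + \mathbb{Q}(A^c) \geq \int \min(p,q)$. It therefore suffices to show $\int\min(p,q) \geq \tfrac{1}{2}\exp(-\text{\rm KL}(\mathbb{P},\mathbb{Q}))$, which is the Bretagnolle–Huber inequality.

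Next I would relate $\int \min(p,q)$ to $BC$. Write $\sqrt{pq} = \sqrt{\min(p,q)}\sqrt{\max(p,q)}$ and apply Cauchy–Schwarz to obtain
\begin{align*}
BC^2 \;=\; \Bigl(\int \sqrt{\min(p,q)}\sqrt{\max(p,q)}\Bigr)^2 \;\leq\; \Bigl(\int\min(p,q)\Bigr)\Bigl(\int\max(p,q)\Bigr).
\end{align*}
Using $\min(p,q)+\max(p,q)=p+q$ so that $\int\max(p,q) = 2 - \int\min(p,q)$, and denoting $m := \int\min(p,q)$, this becomes $BC^2 \leq m(2-m) = 1 - (1-m)^2$. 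Solving, $m \geq 1 - \sqrt{1-BC^2}$, and the elementary inequality $1-\sqrt{1-x}\geq x/2$ for $x\in[0,1]$ (which follows from $(1-x/2)^2 = 1 - x + x^2/4 \geq 1-x$) then yields $m \geq BC^2/2$.

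Finally, I would bound $BC$ from below by $\text{\rm KL}$ using Jensen's inequality on the concave $\log$:
\begin{align*}
\log BC \;=\; \log \int \sqrt{\tfrac{q}{p}}\, p \;\geq\; \int \log\sqrt{\tfrac{q}{p}}\, p \;=\; -\tfrac{1}{2}\text{\rm KL}(\mathbb{P},\mathbb{Q}),
\end{align*}
so $BC^2 \geq \exp(-\text{\rm KL}(\mathbb{P},\mathbb{Q}))$. Chaining the three inequalities $\mathbb{P}(A)+\mathbb{Q}(A^c) \geq m \geq BC^2/2 \geq \tfrac{1}{2}\exp(-\text{\rm KL}(\mathbb{P},\mathbb{Q}))$ completes the argument. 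I expect the only subtle step to be the Cauchy–Schwarz application with the $\min/\max$ factorization; everything else is either trivial bookkeeping (the $\mathbb{1}_A+\mathbb{1}_{A^c}$ step, the $m(2-m)$ algebra) or a textbook Jensen application. If the absolute continuity setup is a concern, one can handle the singular parts separately, but since $\min(p,q)$ already vanishes where either density is zero, the same bound holds on the singular supports with no extra effort.
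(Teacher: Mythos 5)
Your proof is correct. The paper does not actually prove this lemma---it is the Bretagnolle--Huber inequality, and the authors simply cite \citet{bubeck2013bounded}---so there is no in-paper argument to compare against. Your derivation is the standard one for Bretagnolle--Huber, and every step checks out: the pointwise bound $p\,\mathbb{1}_A + q\,\mathbb{1}_{A^c}\geq\min(p,q)$; Cauchy--Schwarz applied to the factorization $\sqrt{pq}=\sqrt{\min(p,q)}\sqrt{\max(p,q)}$; the identity $\int\max(p,q)=2-\int\min(p,q)$; the algebra $m(2-m)=1-(1-m)^2$ together with the elementary estimate $1-\sqrt{1-x}\geq x/2$ on $[0,1]$; and Jensen on $\log$ giving $\log BC\geq-\tfrac{1}{2}\text{KL}(\mathbb{P},\mathbb{Q})$. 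The potential degeneracies are also handled correctly: on $\{p=0\}$ the integrand $\sqrt{pq}$ vanishes so restricting the Jensen step to the support of $\mathbb{P}$ is legitimate, and if $\mathbb{P}\not\ll\mathbb{Q}$ then $\text{KL}=\infty$ and the claim is vacuous.
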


\subsection{Sampling rule for $m$-set}
In this section $\X$ represents the $m$-set.
We first define the following auxiliary vectors for $0\leq i \leq m$, $0\leq j\leq d-m$.
\begin{align*}
\beta_{i,j} = \left(\underbrace{1,\dots,1}_{i},\frac{m-i}{d-i-j},\dots,\frac{m-i}{d-i-j},\underbrace{0,\dots,0}_{j}\right) \in \conv(\X).
\end{align*}
It is trivial to sample with mean $\beta_{i,j}$ with the sampling rule:
\begin{align*}
P_{i,j} = \text{Uniform}\left(\left\{x\in\X \;|\; x_{1,\dots,i}=\mathbf{1}\, \land \,x_{d-j+1,\dots,d}=\mathbf{0}\right\}\right).
\end{align*}
This requires uniform sampling of a $(m-i)$-sized subset of $(d-i-j)$ elements, which can be done in $\mathcal{O}(d)$ time.

Now for a given $x_t \in \conv(\X)$, one sampling rule $P$ such that $\E_{X\sim P}[X] = x_t$ is the following:
First we sort the entries of $x_t$ so that $x$ is the sorted version with $x_1 \geq \cdots \geq x_d$. This takes $\mathcal{O}(d\log(d))$ time.
Next we decompose $x = \sum_{s=0}^{d} p_{x,s} \beta_{i_s,j_s}$ such that $p_{x, s} \in [0,1]$, $\sum_{s=0}^{d}p_{x,s}=1$, $(i_0,j_0)=(0,0)$ and $(i_{s+1},j_{s+1})-(i_s,j_s)\in \{(1,0), (0,1)\}$. In other words, either $i$ or $j$ increases by one from $s$ to $s+1$. 
This decomposition is unique and can be computed in a greedy manner in time $\mathcal{O}(d)$.
Finally the full sampling scheme is $\sum_{s=0}^d p_{x,s}P_{i_s,j_s}$ (in terms of permuted coordinates).
The runtime is dominated by the sorting and hence is $\mathcal{O}(d\log(d))$ overall. 


\subsection{Complete proof for Theorem~\ref{th:m-set}}

\textbf{Bounding $\mathbf{C_{adv}}$:}
\begin{align*}
C_{adv} &=  \max_{x\in\conv(\X)}\left(f(x) + g(x)\right) = \max_{x\in\conv(\X)} \sum_{i:x^*_i=0}\sqrt{x_i}+\sum_{i:x^*_i=1}(\gamma^{-1}-\gamma\log(1-x_i))(1-x_i). 
\end{align*}

The optimization problem is concave in $x$ and symmetric for all $i$ with the same value of $x^*_i$. This implies that the $\argmax$ solution must take the following form: 
\begin{align*}
\left(\argmax_{x\in\conv(\X)}f(x) + g(x)\right)_i =  \begin{cases}\lambda &\mbox{ if }x^*_i = 0\\1-\frac{d-m}{m}\lambda &\mbox{ if }x^*_i = 1\end{cases}
\end{align*}
for some $\lambda\in [0,\min\{1,\frac{m}{d-m}\}]$. 

Therefore, 
\begin{align}
C_{adv} &= \max_{\lambda \in [0,\min(1,\frac{m}{d-m})]} (d-m)\sqrt{\lambda}+m\left(\gamma^{-1}-\gamma\log\left(\frac{d-m}{m}\lambda\right)\right)\frac{d-m}{m}\lambda \nonumber \\
&= \max_{\lambda \in [0,\min(1,\frac{m}{d-m})]} (d-m)\left(\sqrt{\lambda}+\left(\gamma^{-1}-\gamma\log\left(\frac{d-m}{m}\lambda\right)\right)\lambda\right).   \label{eq:optimization of lambda}
\end{align}
Since $\frac{d-m}{m}\lambda \leq 1$ and $\gamma\leq 1$, the derivative is always positive:
\begin{align*}
&\frac{\partial}{\partial \lambda}\left(\sqrt{\lambda}+\left(\gamma^{-1}-\gamma\log\left(\frac{d-m}{m}\lambda\right)\right)\lambda\right)\\
&=\left(\frac{1}{2\sqrt{\lambda}}+\gamma^{-1}-\gamma\log\left(\frac{d-m}{m}\lambda\right)-\gamma\right)\geq \frac{1}{2\sqrt{\lambda}} >0. 
\end{align*}
Therefore we can simply plug in the upper border of $\lambda$ in Eq.\eqref{eq:optimization of lambda}: 

Case $m \leq d/2$ (for which $\gamma=1$ and the optimal $\lambda$ is $m/(d-m)$): 
\begin{align*}
&C_{adv} = (d-m)\left(\sqrt{\frac{m}{d-m}}+\frac{m}{d-m}\right)\leq 2\sqrt{(d-m)m} = \mathcal{O}\left(\sqrt{md}\right). 
\end{align*}

Case $m > d/2$ (for which $\gamma=\min\left\{1, 1/\sqrt{\log\left(\frac{d}{d-m}\right)}\right\}$ and the optimal $\lambda$ is 1): \\
Note that $\gamma \leq \frac{1}{\sqrt{\log\left(\frac{d}{d-m}\right)}}$ and thus $\gamma^{-1} = \max\left\{1, \sqrt{\log\left(\frac{d}{d-m}\right)}\right\} \leq \frac{\sqrt{\log\left(\frac{d}{d-m}\right)}}{\sqrt{\log(2)}}$
and $-\gamma \leq -\frac{\sqrt{\log(2)}}{\sqrt{\log\left(\frac{d}{d-m}\right)}}$. Therefore
\begin{align*}
&C_{adv} \leq (d-m)\left(1+\frac{1}{\sqrt{\log(2)}}\sqrt{\log\left(\frac{d}{d-m}\right)} + \frac{\sqrt{\log(2)}}{\sqrt{\log\left(\frac{d}{d-m}\right)}} \log\left(\frac{m}{d-m}\right)\right) \\
&\leq (d-m)\left(1+\left(\frac{1}{\sqrt{\log(2)}} + \sqrt{\log(2)}\right)\sqrt{\log\left(\frac{d}{d-m}\right)}\right) 
= \mathcal{O}\left((d-m)\sqrt{\log\left(\frac{d}{d-m}\right)} \right).
\end{align*}

\textbf{Bounding $\mathbf{C_{sto}}$:}
With our definitions of $\Delta_i$, for any $x\in\X$, we have 
\begin{align}
\Delta_x = \E\left[ \sum_i (x_i-x_i^*)\ell_{ti}\right] = \E\left[\sum_i (x_i-x_i^*)(\ell_{ti}-\ell_{tm}) \right]
= \sum_{i: x_i^*=1}(1-x_i)|\Delta_i| + \sum_{i: x_i^*=0}x_i\Delta_i \geq \sum_{i: x^*_i=0}x_i\Delta_i, 
\end{align}
and thus for any $\alpha\in [0,\infty)^{|\X|}$
\begin{align}
r(\alpha) = \sum_{x\in\X\setminus\{x^*\}} \alpha_x\Delta_x \geq \sum_{x\in \X\backslash\{x^*\}}\sum_{i: x_i^*=0}  \alpha_x x_i\Delta_i = \sum_{i: x_i^*=0}\overline{\alpha}_i \Delta_i.  \label{eq:lower bound of r}
\end{align}
Therefore, 
\begin{align*}
C_{sto} &= \max_{\alpha\in[0,\infty)^{|\mathcal{X}|}}\sum_{i:x^*_i=0}\sqrt{\overline\alpha_i} -r(\alpha)\\
&\leq \max_{\overline\alpha\in[0,\infty)^d}\sum_{i:x^*_i=0}\left(\sqrt{\overline\alpha_i}-\overline\alpha_i\Delta_i\right) \\
&\stackrel{\text{AM-GM}}{\leq} \max_{\overline\alpha\in[0,\infty)^d}\sum_{i:x^*_i=0}\left(\overline\alpha_i \Delta_i + \frac{1}{4\Delta_i} -\overline\alpha_i\Delta_i\right) = \sum_{i:x^*_i = 0} \frac{1}{4\Delta_i}.
\end{align*}

\textbf{Bounding $\mathbf{C_{add}}$:}
Similar to the ``Bounding $C_{add}$'' part in the proof of Theorem~\ref{th:arbitrary} (earlier in Appendix~\ref{app:arbitrary}), we can bound for any $\alpha\in \Delta(\X)$: 
\begin{align*}
g(\overline\alpha)&=\sum_{i:x^*_i=1} \left(\gamma^{-1}+\gamma\log\left(\frac{1}{1-\overline\alpha_i}\right)\right)(1-\overline\alpha_i) \\
&\leq \left(\gamma^{-1}+\gamma\log\left(\frac{m}{\sum_{i:x^*_i=1}(1-\overline\alpha_i)}\right)\right)\sum_{i:x^*_i=1}(1-\overline\alpha_i)  \tag{by the concavity of $g$} \\
&=\left(\gamma^{-1}+\gamma\log\left(\frac{m}{\sum_{i:x^*_i=0}\overline\alpha_i}\right)\right)\sum_{i:x^*_i=0}\overline\alpha_i \\
&\leq \sum_{i:x^*_i=0}\left(\gamma^{-1}+\gamma\log\left(\frac{m}{\overline\alpha_i}\right)\right)\overline\alpha_i.
\end{align*}
where in the second equality we use an property of $m$-set: $\sum_{i:x^*_i=1} (1-\overline\alpha_i) = \sum_{i:x^*_i=0} \overline\alpha_i$, which follows from the fact that $\overline\alpha$ is in the convex hull of $m$-set. In the last inequality, we simply lower bound $\sum_{i: x_i^*=0 }\overline{\alpha}_i$
by one of its summands. 

Using the same lower bound 
\begin{align*}
r(\alpha)  \geq \sum_{i:x^*_i=0} \Delta_i\overline\alpha_i,    \tag{by Eq. \eqref{eq:lower bound of r}}
\end{align*}
we have an upper bound for $C_{add}$: 
\begin{align*}
C_{add} &= \sum_{t=1}^\infty \max_{\alpha\in \Delta(\X)} \frac{100}{\sqrt{t}}g(\overline\alpha) - r(\alpha) \\
&\leq\sum_{i:x^*_i=0}  \sum_{t=1}^{\infty} \max_{\overline\alpha_i \in [0,1]} \frac{100}{\sqrt{t}}\left(\gamma^{-1}+\gamma\log\frac{m}{\overline{\alpha}_i  } \right)\overline{\alpha}_i  - \Delta_{i}\overline{\alpha}_i \nonumber\\
&\leq \sum_{i:x^*_i=0} \left(
\sum_{t=1}^{\infty} \max_{\overline{\alpha}_i \in[0,1]} \left(\frac{100}{\sqrt{t}}\left(\gamma^{-1}+\gamma\log m\right)\overline{\alpha}_i -\frac{\Delta_{i}}{2}\overline{\alpha}_i\right) 
+ \sum_{t=1}^{\infty} \max_{\overline{\alpha}_i  \in[0,1]} \left(\frac{100}{\sqrt{t}}\gamma \overline{\alpha}_i\log\frac{1}{\overline{\alpha}_i} -\frac{\Delta_{i}}{2}\overline{\alpha}_i\right)
\right). 
\end{align*}
Invoking Lemma~\ref{lemma: another tedius lemma} and~\ref{lemma: tedius lemma} on the above two terms, we get 
\begin{align*}
C_{add}\leq \mathcal{O}\left(  \sum_{i: x_i^*=0}\frac{(\gamma^{-1} + \gamma \log m)^2}{\Delta_{i}}   \right). 
\end{align*}
This can be further upper bounded by $\mathcal{O}\left(\sum_{i: x_i^*=0}\frac{(\log d)^2}{\Delta_{i}}\right)$ by our selection of $\gamma$ in either regime.

\end{document}